\documentclass{article}

\usepackage{microtype}
\usepackage{graphicx}
\usepackage{subcaption}
\usepackage{booktabs} 
\usepackage{graphicx}
\usepackage{multirow}
\usepackage{hyperref}

\usepackage{fancyvrb}
\usepackage{makecell}
\usepackage{ulem}

\usepackage[preprint]{icml2026}

\usepackage{amsmath}
\usepackage{amssymb}
\usepackage{mathtools}
\usepackage{amsthm}

\usepackage[capitalize,noabbrev]{cleveref}

\theoremstyle{plain}
\newtheorem{theorem}{Theorem}[section]
\newtheorem{proposition}[theorem]{Proposition}

\theoremstyle{definition}
\newtheorem{definition}[theorem]{Definition}

\theoremstyle{remark}

\usepackage[textsize=tiny]{todonotes}

\begin{document}
\ULforem
\twocolumn[
  \icmltitle{
  What Makes a Desired Graph for Relational Deep Learning?
  }



  \icmlsetsymbol{equal}{*}

  \begin{icmlauthorlist}
    \icmlauthor{Yao Cheng}{sch}
    \icmlauthor{Siqiang Luo}{sch}
  \end{icmlauthorlist}

  \icmlaffiliation{sch}{Nanyang Technological University, College of Computing and Data Science, Singapore}

  \icmlcorrespondingauthor{}{}


  \vskip 0.3in
]



\printAffiliationsAndNotice{}  

\begin{abstract}
Relational deep learning (RDL) converts relational databases (RDBs) into heterogeneous graphs, but graphs derived directly from database schemas are often not well suited for how graph neural networks (GNNs) perform relational reasoning.
We study what makes a relational graph suitable for deep learning and show that schema-derived graphs suffer from two systematic failures: information overload and semantic fragmentation.
Our empirical analysis reveals that the desired graph is not the raw schema, but a result of controlled structural adaptation. Performance depends on balancing two operations: mitigating information overload via filtering, and repairing semantic fragmentation via injection. 
Specifically, filtering serves as a bias-variance knob with non-monotonic effects, while injection improves performance only when it explicitly restores the relational dependencies missing from the original schema.
Based on these findings, we develop an end-to-end structural optimizer that applies both operations to adapt relational graphs automatically. 
Across 26 tasks spanning classification, regression, and recommendation, the optimized graphs consistently improve accuracy while often reducing inference cost.
Code and data are available at \url{https://github.com/cy623/Structural_Optimizer_RDL.git}.
\end{abstract}

\vspace{-2.5em}
\section{Introduction}
\label{sec:intro}

Relational databases (RDBs)~\cite{1,3,banachewicz2022kaggle} form the backbone of modern data infrastructure. 
They are organized under normalization principles~\cite{fagin1979normal,maier1983theory} to ensure data integrity and storage efficiency. Recently, Relational Deep Learning (RDL)~\cite{7} has made it possible to apply Graph Neural Networks (GNNs)~\cite{LiZhZh26,LiXiZe26,LiSuZe26} directly to RDBs by representing tables and foreign-key relations as heterogeneous graphs.

While intuitive and convenient, this conversion introduces a fundamental mismatch in design objectives. 
RDB schemas are optimized for reliable storage and efficient querying, which often requires decomposing information across multiple normalized tables based on various normal forms (e.g., 2NF, 3NF or BCNF) to remove different levels of data anomalies. 
In contrast, GNNs rely on local connectivity and short relational paths to propagate information efficiently.


Consequently, a graph that is derived from a database schema in a standardized way is rarely desired for learning~\cite{4,5}.
Such graphs typically exhibit two systematic failures.
First, \textit{information overload}: many attributes and entity types in an RDB are weakly or not at all related to the prediction target (e.g., identifiers, logging fields). 
When mapped into a graph, these variables increase representational complexity without adding useful predictive signal.
Second, \textit{semantic fragmentation}: normalization decomposes meaningful relationships into chains of foreign-key links. 
For example, a ``User–Item'' interaction is often represented as ``User–Order–OrderItem–Item''. 
This forces message passing to traverse long paths to recover a dependency that is conceptually direct, increasing both computational depth and the risk of signal decay.
These effects imply that a schema-derived graph is not necessarily a task-faithful graph: preserving all tables and relations may maintain database semantics, but it often obscures the relational structure most relevant for prediction.

In this paper, we move beyond model architecture design and focus on a more fundamental problem:
\textit{what makes a desired graph for Relational Deep Learning?}
We argue that the desired graph is not the raw database schema, but a task-dependent transformation of it. 
Constructing a desired graph requires two complementary operations. 
On one hand, we must remove relational structures and attributes that contribute little to the prediction target. 
On the other hand, we must introduce relational links that directly expose the dependencies required for reasoning.
In this sense, graph construction is not a forward mapping from schema to topology, but a controlled process of compression and augmentation guided by the learning objective.

Accordingly, this paper adopts a principled two-stage approach. We first investigate the determinants of an effective relational graph by systematically analyzing how filtering and structural injection influence predictive performance across a diverse array of real-world databases. 
This analysis reveals consistent, task-dependent regularities governing the trade-off between mitigating information overload and restoring relational connectivity. 
Guided by these insights, we design a structural optimizer that operationalizes these two operations within an end-to-end trainable framework. 
Extensive evaluation on 23 tasks from the RELBench benchmark~\cite{15} demonstrates that our learned graphs consistently outperform those derived from raw schemas and existing relational learning pipelines.
The main contributions of this work are summarized as follows:
\begin{itemize}
    \vspace{-1em}
    \item 
    We provide a systematic analysis of how graph structure derived from relational databases affects relational deep learning, identifying \textit{information overload} and \textit{semantic fragmentation} as the main sources of performance degradation.
    \vspace{-0.4em}
    \item 
    We reveal that constructing effective graphs requires a delicate trade-off between filtering out structural noise and injecting task-relevant signals. Our empirical analysis further characterizes how this structural balance is dictated by the semantics of the downstream task.
     \vspace{-0.4em}
    \item We translate these insights into an end-to-end structural optimizer that automatically adapts relational graphs to the learning objective, and demonstrate its effectiveness on a large benchmark of real-world relational databases.
\end{itemize}


\vspace{-1em}
\section{Preliminaries}
\label{sec:pre}

\subsection{Relational Database}
A relational database $\mathcal{D}$ is defined as a tuple $\mathcal{D} = (\mathcal{T}, \mathcal{R}_{\text{db}})$, 
where $\mathcal{T} = \{T_1, \dots, T_N\}$ is a finite set of $N$ tables, 
and $\mathcal{R}_{\text{db}} \subseteq \mathcal{T} \times \mathcal{T}$ is a set of referential constraints. 
Each table $T_i \in \mathcal{T}$ has a set of attributes $A_i$, and each constraint $(T_i, T_j) \in \mathcal{R}_{\text{db}}$ indicates that $T_i$ contains a foreign key referencing the primary key of $T_j$.
The database schema $(\mathcal{T}, \mathcal{R}_{\text{db}})$ defines the organizational structure of entities and their relationships, optimized for data integrity and efficient query processing rather than representation learning.

\vspace{-0.5em}
\subsection{RDB-derived Heterogeneous Graph}

A heterogeneous graph is defined as $\mathcal{G} = (\mathcal{V}, \mathcal{E}, \phi, \psi)$, where $\mathcal{V}$ denotes the set of nodes and $\mathcal{E} \subseteq \mathcal{V} \times \mathcal{V}$ denotes the set of directed edges. The function $\phi: \mathcal{V} \rightarrow \mathcal{N}$ maps each node to a node type in the set $\mathcal{N}$, while $\psi: \mathcal{E} \rightarrow \mathcal{R}$ maps each edge to a relation type in the set $\mathcal{R}$. 
In RDB schema-derived graphs, node types correspond to database tables, i.e., $\mathcal{N} = \mathcal{T}$.
The conversion from a relational database $\mathcal{D} = (\mathcal{T}, \mathcal{R}_{\text{db}})$ to a heterogeneous graph $\mathcal{G}$ proceeds as follows. 
First, each tuple in a table $T_i \in \mathcal{T}$ is mapped to a node $v \in \mathcal{V}$ with node type $\phi(v) = T_i$ and feature vector $x_v$ derived from its attributes. Second, for each referential constraint $(T_i, T_j) \in \mathcal{R}_{\text{db}}$, directed edges are created between corresponding nodes, with edge types $\psi(e) \in \mathcal{R}$ reflecting the underlying foreign-key relationships. Edge directions follow the referential semantics from foreign-key tables to referenced tables.

To support supervised learning, each predictive task introduces a dedicated training table $T_{\text{train}}$, which contains labels but no input features. 
During training, seed nodes are sampled from $T_{\text{train}}$.
Each training node $v \in T_{\text{train}}$ is associated with a timestamp $t_v$, a ground-truth label $y_v$, and foreign-key links connecting it to relevant entity tables.
Temporal information is essential, for example, when predicting whether a user will purchase an item at time $t$, the model should rely solely on data observed prior to $t$; otherwise, temporal leakage will occur.
To ensure causal consistency in downstream learning tasks, we construct time-consistent subgraphs during sampling. 
For a given seed node at time $t$, the temporal neighborhood includes only nodes whose timestamps satisfy $t_v \le t$, thereby excluding any future information and preventing leakage.

\vspace{-0.5em}
\subsection{Graph Neural Networks}
\label{sec:gnn}
A GNN learns node or edge representations by recursively aggregating information from local neighborhoods.  
When applied to heterogeneous graphs, the aggregation and transformation functions must account for the diversity of node and relation types.
Each node $v \in \mathcal{V}$ is associated with an initial feature vector $x_v$ derived from the corresponding tuple in the relational database.  
These features are linearly transformed to initialize the hidden state using learnable parameters $W_{\phi(v)}$:
\vspace{-0.5em}
\begin{equation}
\vspace{-0.3em}
\label{eq:ini_encoder}
h_v^{(0)} = W_{\phi(v)} x_v,
\vspace{-0.3em}
\end{equation}
which serves as the input representation for message passing in GNNs.
Formally, for a heterogeneous graph $\mathcal{G} = (\mathcal{V}, \mathcal{E}, \phi, \psi)$, 
the message-passing process of a heterogeneous GNN can be expressed as:
\vspace{-0.5em}
\begin{equation}
h_v^{(l+1)} = 
\text{AGG}_{r \in \mathcal{R}} 
\Big( 
\text{AGG}_{u \in \mathcal{N}_r(v)} 
f_r(h_u^{(l)}, h_v^{(l)}) 
\Big),
\vspace{-0.5em}
\end{equation}
where $\mathcal{N}_r(v)$ denotes the neighbors of node $v$ under relation type $r$, 
$f_r(\cdot)$ is a relation-specific message function, 
and $\text{AGG}$ is an aggregation operator such as mean or sum pooling.

\vspace{-1em}
\section{
What Makes a Desired Graph?}
\label{Sec:em_th}
RDBs are engineered for data integrity and storage efficiency, not for representation learning~\cite{fagin1979normal}. Consequently, a heterogeneous graph derived mechanically from an RDB schema is rarely the optimal topology for downstream prediction. It often suffers from two opposing pathologies: \textit{information overload} and \textit{semantic fragmentation}. In this section, we investigate the fundamental question: \textit{What makes a desired graph for RDL?}
We view graph construction as a controlled process and probe structural effects via two operators: ``information filtering'' and ``structural injection''.

\begin{table*}[h]
\centering
\Large
\caption{Performance of information filtering. We highlight the best result in \textbf{bold} and the second best in \emph{italics}.}
\vspace{-0.5em}
\label{ta:performance_if}
\resizebox{\linewidth}{!}{
\begin{tabular}{lccccccccc}
\toprule
\textbf{Methods} & \textbf{\makecell{study-\\outcome($\uparrow$)}}        & \textbf{\makecell{driver\\-top3($\uparrow$)}}          & \textbf{\makecell{user-\\repeat($\uparrow$)}}          & \textbf{\makecell{user-\\clicks($\uparrow$)}}          & \textbf{\makecell{study-\\adverse($\downarrow$)}}        & \textbf{\makecell{driver-\\position($\downarrow$)}}     & \textbf{\makecell{user-\\attendance($\downarrow$)}}                  & \textbf{\makecell{condition-\\sponsor-run($\uparrow$)}} & \textbf{\makecell{post-post\\-related($\uparrow$)}}   \\ \midrule
Base             & 69.44$_{\pm 0.50}$  & 81.48$_{\pm 1.06}$ & 78.70$_{\pm 0.05}$ & 65.44$_{\pm 0.70}$ & 45.50$_{\pm 0.14}$ & 3.88$_{\pm 0.02}$ & 0.24$_{\pm 0.01}$ & 2.83$_{\pm 0.01}$   & 1.24$_{\pm 0.08}$ \\
RandFilter       & 63.21$_{\pm 1.14}$ & 75.36$_{\pm 1.75}$ & 73.27$_{\pm 1.66}$ & 61.54$_{\pm1.82}$  & 48.23$_{\pm 1.17}$ & 4.85$_{\pm 0.87}$ & 0.47$_{\pm 0.05}$ & 2.41$_{\pm 0.01}$   & 0.41$_{\pm 0.08}$ \\
VarFilter        & \emph{69.91$_{\pm 0.43}$} & 82.07$_{\pm 0.81}$ & 78.16$_{\pm 0.12}$ & 66.04$_{\pm 0.57}$ & 45.18$_{\pm 0.11}$ & \emph{3.51$_{\pm 0.03}$ } & 0.24$_{\pm 0.01}$  & 2.91$_{\pm 0.01}$   & 1.63$_{\pm 0.05}$ \\
ColFilter        & 69.87$_{\pm 0.01}$ & \emph{83.64$_{\pm 0.04}$} & \emph{79.95$_{\pm 0.82}$} & \emph{66.31$_{\pm 0.18}$} & \textbf{43.83$_{\pm 0.38}$} & 3.83$_{\pm 0.03}$ & \textbf{0.23$_{\pm 0.01}$}  & \emph{3.12$_{\pm 0.01}$ }  & \emph{1.74$_{\pm 0.01}$} \\
FullFilter       & \textbf{70.85$_{\pm 0.46}$} & \textbf{84.66$_{\pm 0.61}$} & \textbf{80.47$_{\pm 0.89}$ } & \textbf{67.17$_{\pm 0.06}$ } & \emph{43.60$_{\pm 0.05}$} & \textbf{3.79$_{\pm 0.16}$} & \textbf{0.23$_{\pm 0.01}$} &  \textbf{3.63$_{\pm 0.01}$ }  & \textbf{1.77$_{\pm 0.01}$ } \\ \bottomrule
\end{tabular}
}
\end{table*}

\vspace{-0.5em}
\subsection{Information Filtering
}
\label{sec:if}

Filtering aims to remove task-irrelevant entities and attributes. 
In RDL, additional types/columns can introduce nuisance variables and increase the effective search space the GNN must traverse. 
Thus, we treat filtering as a mechanism for structural capacity control: it constrains which parts of the schema-derived representation are even available to the learner.
We implement filtering via a simple hierarchical gating design that can suppress redundancy at two granularities.

\textbf{Column-Level Filtering.}
For a node $v$ with type $\phi(v)=T_i$, we introduce a dedicated learnable gating vector $g_{T_i} \in [0,1]^{d_i}$,
where $d_i$ denotes the dimensionality of the feature vector. 
This vector is applied element-wise to the initial feature $h_v^{(0)}$:
\vspace{-0.5em}
\begin{equation}
\label{eq:gate_c}
\hat{h}_v = g_{T_i} \odot h_v^{(0)}, \quad \forall v \text{ with } \phi(v) = T_i.
\vspace{-0.5em}
\end{equation}
To allow near-discrete selection while preserving differentiability, we use a Hard-Concrete relaxation~\cite{9}:
\begin{equation}
\begin{aligned}
\label{eq:relax}
& g_{T_i} = 
\sigma\!\left(\frac{\log u - \log(1 - u) + \theta_{T_i}}{\tau}\right)
\cdot (\zeta - \gamma) + \gamma, \\
& u \sim \text{Uniform}(0, 1)^{d_i}.
\end{aligned}
\end{equation}
Here $\theta_{T_i}$ are learnable logits, $\tau$ controls the temperature.
Following~\cite{9}, the gates are stretched to $(\gamma, \zeta)$ and then clipped to the $[0, 1]$.
During inference, deterministic gating is applied as 
$g_{T_i} = \mathbb{I}[\sigma(\theta_{T_i}) > \tau_{\text{g}}]$, where threshold $\tau_{\text{g}}$ is a hyperparameter.

\textbf{Table-Level Filtering.} 
At a coarser scale,
a scalar gating variable $s_{T_i} \in [0,1]$ is assigned to each node type $T_i$ and applied to the column-filtered output:
\vspace{-0.5em}
\begin{equation}
\label{eq:gate_t}
    \tilde{h}_v = s_{T_i} \cdot \hat{h}_v.
\vspace{-0.5em}
\end{equation}
Similar to column gating, $s_{T_i}$ is parameterized by a scalar logit $\vartheta_{T_i}$ via the Hard-Concrete distribution. 

\vspace{-0.5em}
\subsection{Structural Injection
}
\label{sec:si}


Filtering alone cannot fix semantic fragmentation: when the schema decomposes an interaction into multi-hop chains, a GNN may need excessive depth to recover a dependency. 
Structural injection addresses this by adding edges that repair missing relational motifs. 
However, a key subtlety is that injection is not a free lunch: adding edges indiscriminately can distort neighborhoods and introduce harmful inductive bias. 
Therefore, we treat injection strategies as controlled, interpretable probes, and empirically ask when each prior helps.
We consider four generic ``semantic repair'' strategies:

\textbf{Type-wise KNN (Homophily Repair).} 
RDB tables often lack explicit links between similar entities. 
To restore local neighborhoods, we connect each node to its top-$n$ most similar peers within the same type $T_i$:
\vspace{-0.5em}
\begin{equation}
    \mathcal{E}_{knn}^{(i)} = \{(u, v) \mid v \in \text{Top-n}_{v' \in T_i}(\text{sim}(h_u^{(0)}, h_{v'}^{(0)}))\},
\vspace{-0.5em}
\end{equation}
where $\text{sim}(\cdot, \cdot)$ denotes cosine similarity. 
This reinforces intra-type homophily.


\textbf{Two-hop Shortcuts (Topological Repair).} 
To mitigate signal attenuation across long relational chains $A \xrightarrow{r_1} B \xrightarrow{r_2} C$, we introduce direct shortcut edges:
\vspace{-0.5em}
\begin{equation}
    \mathcal{E}_{2hop} = \{(u_a, u_c) \mid \exists u_b \text{ s.t. } (u_a, u_b) \in \mathcal{E}_{r_1} \land (u_b, u_c) \in \mathcal{E}_{r_2}\}.
\end{equation}
This explicitly reduces the message-passing depth for high-order dependencies.

\begin{figure*}[h]
  \centering  \includegraphics[width=1\linewidth]{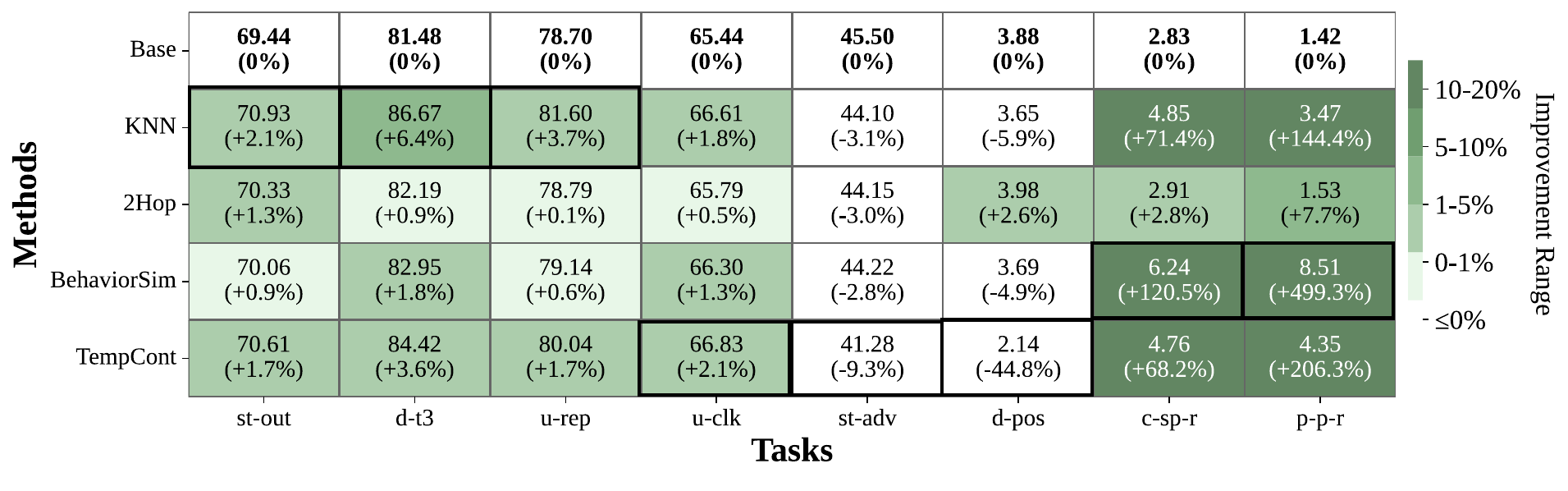}
  \vspace{-2.5em}
  \caption{Performance gains from structural injection across tasks. Black boxes highlight the best score for each task. Due to space limitations, we use abbreviations for all tasks.
  }
  \vspace{-1.5em}
  \label{fg:heatmap}
\end{figure*}

\textbf{Behavioral Similarity (Collaboration Repair).}
For node types $A, B$ connected by relation $r$, we aggregate each $A$-type node’s neighbors of type $B$ and compute pairwise similarity:
\vspace{-0.5em}
\begin{equation}
\begin{aligned}
& \mathcal{E}_{\text{sim}} = 
\\ &
\{(v_{a_i}, v_{a_j}) \mid \text{sim}(f_{agg}(\mathcal{N}_{B}^{r}(v_{a_i})), f_{agg}(\mathcal{N}_{B}^{r}(v_{a_j}))) > \tau_{\text{sim}}\},
\end{aligned}
\end{equation}
where $\mathcal{N}_{B}^{r}(v_a)$ denotes the set of $B$-type neighbors of $v_a$ connected via relation $r$, $f_{agg}$ is an aggregation function, and $\tau_{\text{sim}}$ is a similarity threshold. 



\textbf{Temporal Continuity (Causal Repair).} 
RDBs store states as discrete rows. To capture causal evolution, we link time-indexed nodes of an entity in their natural temporal order. For a sequence $[v_1, \ldots, v_m]$ sorted by time, we define:
\vspace{-0.5em}
\begin{equation}
E_{\text{time}} = \{(v_{t-1}, v_t) \mid t = 2, \ldots, m\}.
\vspace{-0.5em}
\end{equation}
This allows the graph to explicitly model state transitions $(v_{t-1} \rightarrow v_t)$.
This template is only instantiated for node types with a timestamp column, and edges connect records of the \textit{same entity} across time.




A single strategy can be instantiated at multiple compatible node-type configurations; we refer to each realization as a ``template instance'' and isolate the effect of one strategy per analysis experiment by enabling only its template set.
For a fixed strategy $s$, its instantiation over the graph yields a finite set of templates:
$\mathcal{M}(s) = \{\mathcal{E}_k^{(s)}\}_{k=1}^{K_s}$,
where each $\mathcal{E}_k^{(s)}$ represents a concrete injected edge set and $K_s = |\mathcal{M}(s)|$ is the number of instantiated templates. The resulting augmented graph is given by:
$\mathcal{E}' = \mathcal{E} \cup \bigcup_{k=1}^{K_s} \mathcal{E}_k^{(s)}$.

\begin{figure}[h]
  \centering  \includegraphics[width=1\linewidth]{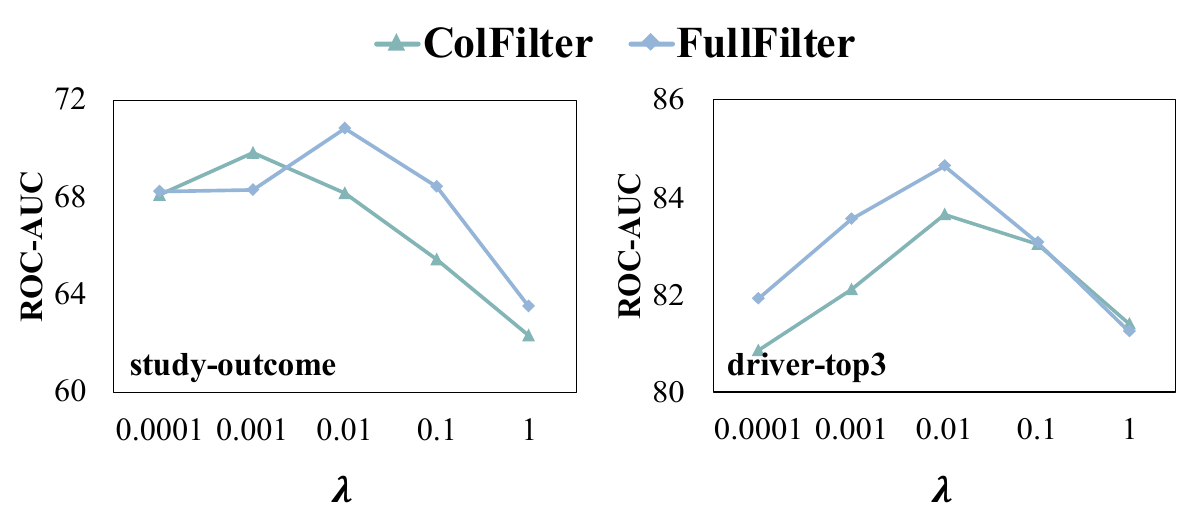}
  \vspace{-2em}
  \caption{Effects of filtering strength on study-outcome and driver-top3 tasks.
  }
  \label{fg:sublamada}
\end{figure}

\vspace{-0.8em}
\subsection{Empirical Analysis: Structural Probes for Desired Graphs
}
\label{sec:empirical}

We conduct controlled experiments on 9 tasks from RELBench~\cite{15} to characterize how these structural primitives affect learning. 
By observing the behavior of these probes, we derive key insights into the anatomy of a desired graph.
The statistics and details of these datasets and tasks can be found in Appendix~\ref{ap:dataset}.


\vspace{-1em}

\subsubsection{Semantic sparsification beats statistical pruning}
\label{sec:RQ1}


Table~\ref{ta:performance_if} contrasts different filtering approaches. 
A particularly revealing comparison is variance-based filtering (VarFilter) versus semantic gating. 
VarFilter keeps high-variance features, yet it does not consistently improve over the base graph and can underperform stronger semantic filters. 
This highlights a core property of relational data: many high-variance attributes are operational artifacts that are statistically ``active'' but causally irrelevant to the label.
In other words, variance is not predictive signal in RDBs; filtering must reflect semantic relevance to the task.
Next, Figure~\ref{fg:sublamada} shows a consistent non-monotonic trend as we increase filtering strength (by varying the sparsity weight $\lambda$). 
Performance first improves as nuisance structure is removed, peaks at an intermediate regime, and then degrades when the filtering begins to erase task-relevant evidence. 
This inverted-U behavior means the desired graph is not the sparsest graph, nor the fullest schema graph, but the one that strikes a task-dependent balance between denoising and information retention.
Overall,
when optimizing schema-derived graphs, treat filtering as a bias–variance knob: insufficient filtering leaves semantic noise that can cause oversmoothing and signal dilution, while excessive filtering induces under-specification by removing useful relational evidence.

\begin{figure*}[h]
  \centering  \includegraphics[width=1\linewidth]{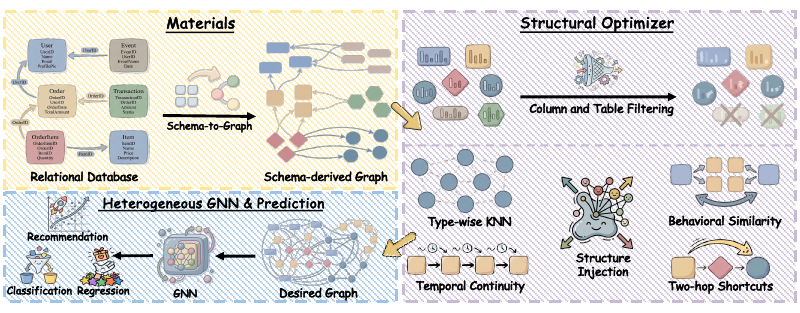}
  \vspace{-2em}
  \caption{Overview of the unified structural optimizer.
  The framework takes a relational database, converts it into heterogeneous graphs, and applies two complementary structural operations, namely information filtering and structural injection, before passing the optimized graph into a heterogeneous GNN for end-to-end prediction.
  }
  \vspace{-1em}
  \label{fig:optimizer}
\end{figure*}

\vspace{-0.8em}
\subsubsection{Injection helps when it introduces the right relational motif
}
\label{sec:RQ2}

Figure~\ref{fg:heatmap} provides a complementary picture for injection. 
A naive expectation might be that adding edges increases expressivity and thus helps performance. 
The data contradicts that: injection is beneficial only for certain task families, and can be neutral or negative when the injected motif does not match what the label depends on.
We intentionally proposed multiple, qualitatively different repair motifs and used the results to infer task–motif regularities: 
(1) Classification tasks tend to benefit from homophily repair (type-wise KNN), consistent with the need to reinforce local cluster structure for separability.
(2) Regression tasks show strong gains from temporal continuity, indicating that explicit causal chains provide the most relevant inductive bias.
(3) Recommendation tasks benefit most from behavioral similarity, aligning with the need to expose collaborative signals that are otherwise fragmented across join paths.
Equally important is the negative evidence: applying a mismatched motif can degrade performance (e.g., adding shortcuts to temporal tasks), which implies that the desired graph is not characterized by edge density, but by whether the topology injects task-relevant inductive bias.
Overall,
structural injection should be treated as motif selection: add edges to expose the dependency pattern the task appears to require.

Filtering and injection are complementary, filtering removes weakly relevant structure, while injection selectively adds relational ``shortcuts''. 
A desired graph emerges when these two operations jointly move the schema-derived graph away from overload and fragmentation toward a compact topology whose neighborhoods reflect task-relevant dependencies.
More analysis of the effects of filtering and injection strength can be found in Appendix~\ref{ap:empirical}.

\subsection{Theoretical Principles}
\label{sec:Theoretical}

We formalize the notion of a \textit{desired graph} and connect it to the two structural operators studied in Section~\ref{sec:empirical}.
Let $\phi_g$ denote structural parameters (feature/type/template gates) that induce a graph view $G_{\phi_g}$ from a schema-derived graph $G$, and let $\theta$ denote backbone parameters.
In particular, we define the regularized risk:
\vspace{-0.5em}
\begin{equation}
\begin{aligned}
\vspace{-1em}
 \mathcal{R}(\phi_g,\theta)
 & = 
\mathbb{E}\big[ \mathcal{L}_{\text{task}}(Y,\hat Y_{\phi_g,\theta}) \big]
\\ & + \beta_{\text{vib}} \,\mathbb{E}\!\left[
\mathrm{KL}\big(q(z_v \mid \tilde h_v)\,\Vert\,\mathcal{N}(0,I)\big)
\right],
\end{aligned}
\vspace{-0.5em}
\label{eq:population-risk}
\end{equation}
and the structural complexity:
\begin{equation}
\begin{aligned}
\Omega(\phi_g)
& = \lambda \sum_{T_i \in \mathcal{T}} \mathbb{E}\big[\|g_{T_i}\|_0\big]
+ \lambda_s \sum_{T_i \in \mathcal{T}} \mathbb{E}\big[\|s_{T_i}\|_0\big]
\\ & + \lambda_k \sum_{k=1}^{K_{\text{temp}}} \mathbb{E}\big[\|g_k\|_0\big],
\label{eq:structural-complexity}
\end{aligned}
\vspace{-0.5em}
\end{equation}
where $g_{T_i}$ and $s_{T_i}$ are the column-level and table-level gates of table $T_i$, and $g_k$ is the gate associated with the $k$-th injection template.
For a fixed structure $\phi_g$, the best achievable population risk is:
\vspace{-0.5em}
\begin{equation}
\mathcal{R}^\star(\phi_g) := \inf_{\theta} \mathcal{R}(\phi_g,\theta).
\label{eq:best-risk}
\vspace{-0.3em}
\end{equation}


\begin{definition}[Desired graph]
\label{def:desired-graph}
A desired graph for a given task is any graph view $G_{\phi_g^\star}$ whose structure parameter $\phi_g^\star$ solves:
\vspace{-0.5em}
\begin{equation}
\phi_g^\star \in \arg\min_{\phi_g} \Big\{ \mathcal{R}^\star(\phi_g) + \Omega(\phi_g) \Big\}.
\label{eq:desired-graph}
\vspace{-1em}
\end{equation}
\end{definition}

Intuitively, a desired graph is one that (i) supports good task performance after optimizing the encoder parameters $\theta$, and (ii) is structurally simple according to the sparsity-promoting regularizer $\Omega(\phi_g)$. 

The regularized risk (Eq.~\eqref{eq:population-risk}) includes a VIB term $\beta_{\text{vib}} \mathbb{E}[\text{KL}(\cdots)]$, which provides continuous information compression complementary to discrete gate-based filtering. 
We defer the VIB module design to Section~\ref{sec:Information_Filtering}, where the complete unified optimizer is presented.

\noindent\textbf{Filtering as structural capacity control.}

Let $M=(M_1,\ldots,M_J)$ denote the collection of Bernoulli gates (covering attribute-, type-, and optionally template-level selection), and let $\mu:=\mathbb{E}\|M\|_0$ be the expected number of active components.
The following theorem shows that expected sparsity controls an explicit upper bound on the entropy of gating configurations, hence restricting the structural hypothesis space.

\begin{theorem}[Filtering controls structural capacity]
\label{thm:entropy}
Let $M=(M_1,\ldots,M_J)$ be Bernoulli gates with $\pi_j=\mathbb{P}(M_j=1)$ and $\mu=\sum_{j=1}^J \pi_j=\mathbb{E}\|M\|_0$.
Then $H(M)\le \sum_{j=1}^J h(\pi_j)\le J\,h\!\left(\frac{\mu}{J}\right),$
where $H(\cdot)$ is Shannon entropy and $h(\cdot)$ is binary entropy.
In the sparse regime $\mu/J\le 1/2$ (encouraged by $\ell_0$ penalties), the bound $J\,h(\mu/J)$ is monotone increasing in $\mu$.
\end{theorem}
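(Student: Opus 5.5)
The plan is to prove the three claims of Theorem~\ref{thm:entropy} in order, each with a standard information-theoretic tool.

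\textbf{First inequality.} I will use subadditivity of Shannon entropy. By the chain rule,
\[
H(M)=\sum_{j=1}^J H(M_j\mid M_1,\ldots,M_{j-1}).
\]
Conditioning cannot increase entropy, so each term is at most $H(M_j)$. Since $M_j$ is Bernoulli$(\pi_j)$, we have $H(M_j)=h(\pi_j)$, which gives $H(M)\le\sum_j h(\pi_j)$. Equality holds exactly when the gates are independent, as they are for the independently sampled Hard-Concrete gates. No independence assumption is needed for the inequality itself.

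\textbf{Second inequality.} I will use concavity of the binary entropy $h(p)=-p\log p-(1-p)\log(1-p)$ on $[0,1]$. Concavity follows from $h''(p)=-1/(p(1-p))<0$ on $(0,1)$, together with continuity at the endpoints. Jensen's inequality with uniform weights $1/J$ then gives
\[
\frac1J\sum_j h(\pi_j)\le h\!\left(\frac1J\sum_j\pi_j\right)=h\!\left(\frac{\mu}{J}\right).
\]
Multiplying by $J$ yields $\sum_j h(\pi_j)\le J\,h(\mu/J)$.

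\textbf{Monotonicity.} Define $f(\mu)=J\,h(\mu/J)$ for $\mu\in[0,J]$. Then
\[
f'(\mu)=h'(\mu/J)=\log\frac{1-\mu/J}{\mu/J},
\]
which is nonnegative exactly when $\mu/J\le 1/2$ and strictly positive for $\mu/J<1/2$. Hence $f$ is increasing on $[0,J/2]$. At $\mu=0$ the derivative is $+\infty$, but $f$ is continuous there, so monotonicity on the closed interval still holds. This means that lowering the expected number of active gates, which is what the $\ell_0$ penalty in $\Omega(\phi_g)$ does, lowers the entropy bound.

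\textbf{Expected obstacle.} The only delicate point is the boundary behaviour: the convention $0\log 0=0$ and the endpoints of the concavity and monotonicity arguments. Continuity of $h$ on $[0,1]$ handles all of these, so I expect no substantive difficulty. Optionally, I would remark that the bound $J\,h(\mu/J)\le \mu\log(eJ/\mu)$ makes the capacity-control interpretation explicit.
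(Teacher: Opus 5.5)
Your proposal is correct and follows the paper's proof essentially step for step: subadditivity of Shannon entropy together with $H(M_j)=h(\pi_j)$, then Jensen's inequality via concavity of $h$, then monotonicity of $h$ on $[0,1/2]$. The only difference is that you derive the standard facts the paper merely cites, namely subadditivity via the chain rule, the sign of $h'$, and the endpoint conventions; this adds detail but does not change the argument.
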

The proof is in Appendix~\ref{ap:proof}. This provides a theoretical lens for the inverted-U behavior in Section~\ref{sec:empirical}: too little filtering preserves a large structural space where nuisance configurations persist, while too much filtering overly restricts structure and removes useful evidence.

\vspace{-1em}
\paragraph{Injection as search-space enrichment.}
Structural injection introduces additional candidate edges via instantiated templates.
When template gates are learnable (and penalized in $\Omega(\phi_g)$ via Eq.~\eqref{eq:structural-complexity}, injection enlarges the feasible family of structures; in an idealized population setting, this cannot worsen the best achievable regularized objective in Eq.~\eqref{eq:desired-graph}, since the optimizer can always suppress harmful templates by setting their gates to zero.
We formalize this monotonicity in Appendix~\ref{ap:proof}.



\section{A Unified Structural Optimizer}
\label{sec:optimizer}
Section~\ref{Sec:em_th} characterizes \textit{what makes a desired graph} by treating graph construction as a controlled process with two operators: information filtering and structural injection.
Here we provide a simple end-to-end instantiation that operationalizes these operators.
Our goal in this section is not to claim a uniquely optimal optimizer design, but to show that the empirical regularities can be exploited automatically by learning (i) \textit{what to remove} and (ii) \textit{what to add} under a unified objective.
Our optimizer is deliberately constrained to use interpretable templates rather than free-form edge parameters. This design choice prioritizes scientific interpretability over maximum flexibility: each template has semantic identity (homophily repair, temporal continuity, etc.), allowing us to extract reusable findings about task-motif regularities. 
An unconstrained structure learner could achieve higher numbers but would produce opaque structures from which no generalizable principles can be extracted.
An overview of the architecture is shown in Figure~\ref{fig:optimizer}.

Given the relational-to-graph transformation in Section~\ref{sec:pre}, the model operates on temporally consistent subgraphs. Initial node embeddings are obtained via the heterogeneous encoder in Eq.~\eqref{eq:ini_encoder}. These embeddings are then refined by the two operators.
The resulting graph is processed by a heterogeneous GNN, followed by a multi-layer perceptron (MLP)~\cite{lecun2015deep} head for prediction.

\vspace{-0.2em}
\subsection{Information Filtering}
\label{sec:Information_Filtering}
The information filtering module implements the dual-level sparsification defined in Section~\ref{sec:if}. It operates on the encoded features $h_v^{(0)}$ and applies column- and table-level gates based on the Hard-Concrete relaxation. Concretely, we reuse the gating functions in Eqs.~\eqref{eq:gate_c}--\eqref{eq:gate_t}: column-level gates $g_{T_i}$ suppress noisy feature dimensions, and type-level gates $s_{T_i}$ modulate the contribution of each node type.

To complement discrete sparsification with continuous compression, we apply a type-wise Variational Information Bottleneck (VIB) on the filtered representations. For each node $v$ of type $T_i$, the VIB module maps $\tilde{h}_{v}$ to a stochastic latent representation $z_v$:
\vspace{-0.5em}
\begin{equation}
z_v
=
\boldsymbol{\mu}(\tilde{h}_{v})
+
\boldsymbol{\sigma}(\tilde{h}_{v}) \odot \boldsymbol{\epsilon},
\qquad
\boldsymbol{\epsilon} \sim \mathcal{N}(\mathbf{0}, \mathbf{I}),
\label{eq:vib_reparam}
\end{equation}
where $\boldsymbol{\mu}(\cdot)$ and $\boldsymbol{\sigma}(\cdot)$ are learnable functions producing the mean and standard deviation of the latent distribution. During inference, we use the mean $z_v = \boldsymbol{\mu}(\tilde{h}_{v})$.

This VIB layer provides a continuous information bottleneck on top of the hard-concrete gates, encouraging compact, task-relevant representations while leaving structural decisions to the gating module.

\vspace{-0.2em}
\subsection{Structural Injection}

The structural injection module enhances the relational graph by adding auxiliary edges that improve information propagation. It complements feature-level filtering by addressing connectivity limitations arising from schema-based graphs.
We implement the four strategies introduced in Section~\ref{sec:si}: type-wise KNN, two-hop shortcuts, behavioral similarity, and temporal continuity. Each strategy $s \in \mathcal{S}$ yields a set of instantiated templates:
\vspace{-0.5em}
\begin{equation}
\mathcal{M}(s) = \{\mathcal{E}^{(s)}_k\}_{k=1}^{K_s},
\vspace{-0.5em}
\end{equation}
and the full candidate set is:
\vspace{-0.6em}
\begin{equation}
\vspace{-0.6em}
\mathcal{M}_{\text{all}} = \bigcup_{s \in \mathcal{S}} \mathcal{M}(s).
\end{equation}
To adaptively regulate the contribution of each template,
we associate every $\mathcal{E}_k \in \mathcal{M}_{\text{all}}$ with a learnable gate
$g_k \in [0, 1]$. 
We parameterize each template gate $g_k$ using
the same Hard-Concrete relaxation as in Eqs.~\eqref{eq:gate_c}-\eqref{eq:gate_t}, so that
$g_k$ acts as a stochastic gate during training and its expected
activation $\mathbb{E}[\|g_k\|_0]$ admits an $\ell_0$-style sparsity
penalty. 
Let $g = (g_1, \dots, g_K)$ denote all structural gates.
During training, we retain differentiability by using $g_k$ as a continuous weight on template-specific messages rather than
performing hard selection.
For template $\mathcal{E}_k$, the aggregated message at layer $l$ is:
\vspace{-1em}
\begin{equation}
\label{eq:agg_new}
z_{v,k}^{(l)}
=
g_k \cdot \text{AGG}\big(z_u^{(l)} : u \in \mathcal{N}_k(v)\big),
\vspace{-0.5em}
\end{equation}
where $\mathcal{N}_k(v)$ is the neighbor set induced by $\mathcal{E}_k$ and $\text{AGG}$ is an aggregation operator (e.g., mean or sum).
At inference time, we switch to deterministic selection for efficiency:
\vspace{-1em}
\begin{equation}
\mathcal{E}'
=
\mathcal{E} \cup \{\mathcal{E}_k \mid g_k > \tau_k\},
\label{eq:gated_struct_inject}
\vspace{-0.5em}
\end{equation}
where $\tau_k$ is a threshold hyperparameter,
and retaining only templates with $g_k > \tau_k$ results in a sparse augmented graph.

\subsection{Integrated Graph Representation Learning}
The filtered features and injected structure are combined in the final representation learning phase. The latent representations $z_v$ serve as node features, and the edge set $\mathcal{E}'$ defines the connectivity. A  GNN then performs message passing over both original and template-induced relations.

We treat each instantiated structural template as defining a distinct relation type. Let $\mathcal{R}$ denote the set of original relation types and 
$\mathcal{R}_{\text{template}}$ includes all instantiated templates $\mathcal{E}_k$.
For node $v$ at layer $l$, the update rule aggregates over all relation types:
\vspace{-0.5em}
\begin{equation}
z_v^{(l+1)}
=
\text{AGG}_{r \in \mathcal{R} \cup \mathcal{R}_{\text{template}}}
\Big(
\text{AGG}_{u \in \mathcal{N}_r(v)}
f_r(z_u^{(l)}, z_v^{(l)})
\Big),
\end{equation}
where $\mathcal{N}_r(v)$ denotes the neighbors of $v$ under relation $r$. For template-induced relations, message aggregation is modulated by the corresponding structural gate $g_k$ as defined in Eq.~\eqref{eq:agg_new}.
The final node representations $z_v^{(L)}$ are fed to an MLP head: 
\begin{equation}
\hat{y}_v = \text{MLP}(z_v^{(L)}),
\end{equation}
where $L$ is the maximum depth of the GNN.
Finally, the model is trained end-to-end using the following objective:
\begin{equation}
\begin{split}
 \mathcal{L} & = 
 \mathcal{L}_{\mathrm{task}}(\hat{Y}, Y) \\
& + \beta_{\mathrm{vib}} 
\sum_{T_i \in \mathcal{T}} \frac{1}{|T_i|} \sum_{v \in T_i} 
\mathrm{KL}\!\left(q(z_v \mid \tilde{h}_v) \parallel \mathcal{N}(\mathbf{0}, \mathbf{I})\right) \\
&+  \lambda \left(  \sum_{T_i \in \mathcal{T}} \mathbb{E}\!\left[\|g_{T_i}\|_0\right]
+ \lambda_{s} \sum_{T_i \in \mathcal{T}} \mathbb{E}\!\left[\|s_{T_i}\|_0\right] \right)\\
&+ \lambda_{k} \sum_{k=1}^{K_{\mathrm{temp}}} \mathbb{E}\!\left[\|g_k\|_0\right],
\end{split}
\label{eq:total_loss_final}
\end{equation}
where $\mathcal{L}_{\mathrm{task}}$ is the supervised loss, $\beta_{\mathrm{vib}}$, $\lambda$, $\lambda_s$, and $\lambda_k$ are hyperparameters, and $K_{\mathrm{temp}}$ denotes the total number of instantiated structural templates.

\section{Related Work}
\label{sec:related}
\subsection{Traditional Methods on Relational Databases}
The conventional approach to predictive modeling on RDBs involves two distinct steps: feature engineering via SQL-based data integration from multiple tables, followed by training standard tabular models~\cite{3,11}. 
XGBoost~\cite{11} is an advanced gradient boosting framework that enhances model performance by sequentially building decision trees. 
It is renowned for achieving high predictive accuracy while maintaining exceptional computational speed and scalability, making it a benchmark algorithm for tabular data.
LightGBM~\cite{3} is a gradient boosting framework designed for high performance and scalability. By leveraging histogram-based algorithms and innovative sampling techniques, it delivers exceptional training speed and efficiency.
A key limitation of this two-step paradigm is its inability to capture the rich structural information that inherently connects database entities.

\subsection{Deep Learning on Relational Databases}
RDL aims to directly apply deep neural networks to the structured data in relational databases~\cite{cvitkovic2020supervised,7,4,vsir2021deep,zahradnik2023deep,kanatsoulis2025learning}. 
Its core idea is to learn complex patterns and dependencies from multiple interrelated tables in an end-to-end manner.
Fey et al.~\cite{7} introduces a blueprint for end-to-end learning on relational databases. 
It represents the relational database as a temporal heterogeneous graph, and then a graph neural network learns to leverage representations from all input data.
REL-GNN~\cite{4} introduces atomic paths to enable direct single-hop interactions between source and target nodes, and designs a new composite message-passing and graph attention mechanism, thus reducing redundancy and enhancing prediction accuracy.
RelGT~\cite{5} is the first graph transformer architecture specifically designed for relational tables. 
It employs a novel multi-element tokenization strategy capable of efficiently encoding heterogeneity, temporality, and topology without requiring expensive pre-computation.
However, the performance of these RDL models is fundamentally bounded by the initial relational-to-graph conversion. Prevailing mechanical mapping strategies often produce graphs with high complexity and redundancy, which in turn cripples downstream learning from the very beginning.

\begin{table*}[t]
\centering
\caption{Overall predictive performance (ROC-AUC (\%), $\uparrow$) on classification tasks. 
We highlight the best result in \textbf{bold} and the second best in \emph{italics}.}
\vspace{-0.5em}
\label{tab:rq3_cls}
\resizebox{\linewidth}{!}{
\begin{tabular}{lccccccc}
\toprule
\textbf{Method} & \textbf{study-outcome} & \textbf{driver-dnf} & \textbf{driver-top3} & \textbf{user-repeat} & \textbf{user-ignore} & \textbf{user-clicks} & \textbf{user-visits} \\
\midrule
Base   & 69.44$_{\pm 0.53}$ & 74.53$_{\pm 0.27}$ & 81.48$_{\pm 1.06}$ & 78.75$_{\pm 0.62}$ & 75.39$_{\pm 0.54}$ & 65.44$_{\pm 0.71}$ & 63.82$_{\pm 0.67}$ \\
LightGBM        & 65.27$_{\pm 0.81}$ & 69.13$_{\pm 1.14}$ & 76.09$_{\pm 0.93}$ & 74.51$_{\pm 0.72}$ & 74.58$_{\pm 0.71}$ & 61.27$_{\pm 0.93}$ & 63.03$_{\pm 0.85}$  \\
HAN             & 69.05$_{\pm 0.67}$ & 74.05$_{\pm 1.08}$ & 82.09$_{\pm 0.93}$ & 79.24$_{\pm 0.75}$ & 79.27$_{\pm 0.76}$ & 66.15$_{\pm 0.63}$ & 66.57$_{\pm 0.74}$  \\
GTN & 68.15$_{\pm 0.47}$ & 72.36$_{\pm 0.68}$ & 82.50$_{\pm 0.63}$ & 79.85$_{\pm 0.61}$ & 78.62$_{\pm 0.77}$ & 64.85$_{\pm 0.69}$ & 66.82$_{\pm 0.34}$  \\
HGSL & 69.51$_{\pm 0.47}$ & 71.56$_{\pm 0.48}$ & 83.51$_{\pm 0.75}$ & 80.64$_{\pm 0.81}$ & 77.23$_{\pm 1.07}$ & 63.42$_{\pm 0.86}$ & 67.08$_{\pm 0.71}$  \\
REL-GNN         & \emph{70.86$_{\pm 0.54}$} & 75.41$_{\pm 0.29}$ & 82.71$_{\pm 0.83}$ & 79.08$_{\pm 0.81}$ & 80.31$_{\pm 0.86}$ & 65.91$_{\pm 0.82}$ & 68.14$_{\pm 0.76}$ \\
RelGT           & 69.74$_{\pm 0.32}$ & 76.33$_{\pm 0.58}$ & 83.14$_{\pm 0.75}$ & 79.46$_{\pm 0.32}$ & \emph{81.24$_{\pm 0.77}$} & 66.87$_{\pm 0.60}$ & 68.94$_{\pm 0.65}$ \\
LLM-Struct      & \emph{70.83$_{\pm 0.48}$} & 76.80$_{\pm 0.39}$ & \emph{83.57$_{\pm 0.37}$} & \emph{80.13$_{\pm 0.77}$} & 79.90$_{\pm 0.73}$& \textbf{68.75$_{\pm 0.37}$} & \emph{69.25$_{\pm 0.39}$} \\
Ours& \textbf{72.35}$_{\pm 0.37}$ & \textbf{78.21}$_{\pm 0.15}$ & \textbf{86.82}$_{\pm 0.61}$ & \textbf{82.36}$_{\pm 0.55}$ & \textbf{81.25}$_{\pm 0.64}$ & 
\emph{67.53$_{\pm 0.15}$} & \textbf{70.15}$_{\pm 0.56}$ \\
\bottomrule
\end{tabular}
}
\vspace{-0.5em}
\end{table*}

\subsection{Structural
Optimization in Heterogeneous Graphs}
Existing methods for structural optimization in heterogeneous graphs~\cite{12,13,schlichtkrull2018modeling,18,you2021identity} generally aim to enhance graph quality through topology refinement, often by adjusting edge weights or learning latent structures. 
For instance, GTN~\cite{13} introduces an approach to automatically generate useful meta-paths and identify soft connections between nodes, effectively learning a new graph topology that is more conducive to message passing. 
Meanwhile, HGSL~\cite{12} jointly learns both the graph structure and node representations in an end-to-end manner, adapting the connectivity pattern by injecting new edges and refining existing ones based on task-specific objectives. 
HAN~\cite{18} uses meta-path–guided attention to enhance semantic connectivity, serving as a classical heterogeneous structure optimizer that strengthens predefined relational semantics.
A key limitation of these approaches is their underlying assumption that the input graph possesses a semantically meaningful base topology to be refined. However, in graphs derived from relational databases, connectivity is dictated by schema design and foreign-key constraints rather than task relevance. 
This results in node and relation types that reflect data organization principles instead of task semantics.

\section{Experiments}
\label{sec:exp}

\begin{table}[t]
\centering
\huge
\caption{Overall predictive performance (MAE, $\downarrow$) on regression tasks.
Best results are in \textbf{bold}, second best are in \emph{italics}.}
\label{tab:rq3_reg}
\resizebox{1\linewidth}{!}{
\begin{tabular}{lccccc}
\toprule
\textbf{Method} & \textbf{\makecell{study-\\adverse}} & \textbf{\makecell{site-\\success}} & \textbf{\makecell{driver-\\position}} & \textbf{\makecell{user-\\attendance}} & \textbf{ad-ctr} \\
\midrule
Base & 45.50 &38.86&3.88&0.24 & 0.231            \\
LightGBM        &48.31 & 0.38 & 4.95 & 0.51 & 0.242 \\
HAN             & 45.16& 0.35 & 3.76 & 0.24 & 0.228  \\
GTN             & 44.82& 0.36 & 3.70 & 0.23 & 0.225  \\
HGSL            & 44.69& 0.36 & 3.85 & 0.25 & 0.246  \\
REL-GNN         & 44.93 & 0.34 & 3.78 & 0.23 & 0.227 \\
RelGT           & 44.62 & \emph{0.32} & 3.72 & 0.23 & \emph{0.225}  \\
LLM-Struct      & \emph{44.73} & 0.34 & \emph{3.69} & 0.23 & 0.228 \\
Ours & \textbf{41.28} & \textbf{0.31} & \textbf{2.13} & 0.23 & \textbf{0.221} \\
\bottomrule
\end{tabular}
}
\end{table}

\begin{table}[t]
\centering
\Huge

\caption{Overall predictive performance on recommendation tasks (MAP, $\uparrow$). 
Best results are in \textbf{bold}, second best are in \emph{italics}.}
\label{tab:rq3_rec}
\resizebox{1\linewidth}{!}{
\begin{tabular}{lccccc}
\toprule
\textbf{Method} 
& \textbf{\makecell{user-ad\\-visit} }
& \textbf{\makecell{user-post-\\comment} }
& \textbf{\makecell{post-post-\\related} }
& \textbf{\makecell{condition-\\sponsor-run} }
& \textbf{\makecell{site-sponsor\\-run} }
\\
\midrule
Base            
& 2.31 
& 8.77
& 1.24
& 2.83
& 10.21
\\
LightGBM        
& 1.14
& 6.26
& 0.41
& 2.15
& 8.35
\\
HAN             
& 2.33
& 11.06
& 1.31
& 2.91
& 15.27
\\
ID-GNN	
& 3.24
& 13.41
& 10.61	
& 11.65
& 18.25
\\
REL-GNN         
& 2.87
& 13.12
& 2.43
& 3.47
& 17.31
\\
LLM-Struct      
& \emph{3.28}
& \emph{13.64}
& \emph{4.72}
& \emph{4.62}
& \emph{18.71}
\\
Ours
& \textbf{3.84}
& \textbf{14.82}
& \textbf{8.53}
& \textbf{6.24}
& \textbf{19.15}
\\
\bottomrule
\end{tabular}
}
\end{table}

\subsection{Experimental Setup}
\label{sec:exp_setup}

\paragraph{Datasets and tasks.}
We use the same RELBench tasks and evaluation protocol as in Section~\ref{sec:empirical}, covering entity classification, regression, and recommendation. Dataset statistics are deferred to Appendix~\ref{ap:dataset}.

\vspace{-0.5em}
\paragraph{Evaluation.} 
We evaluate predictive performance using task-specific metrics. 
For entity classification, we report the Area Under the ROC Curve (ROC-AUC)~\cite{16}, where higher values indicate better performance. 
For entity regression, we use Mean Absolute Error (MAE), where lower values indicate better performance. 
For recommendation tasks, we report Mean Average Precision (MAP), where higher values correspond to better ranking quality. 
All reported results are averaged over five independent runs with different random seeds.

\vspace{-0.5em}
\paragraph{Baselines.}
We compare our unified structural optimizer with five groups of baselines.
\textbf{(1) Base relational GNN pipeline:} Base.
\textbf{(2) Traditional non-graph methods:}
LightGBM~\cite{3}.
\textbf{(3) Heterogeneous graph structure optimization methods:}
HAN~\cite{18}, ID-GNN~\cite{you2021identity}, GTN~\cite{13} and HGSL~\cite{12}.
\textbf{(4) RDL methods:}
REL-GNN~\cite{4} and Relational Graph Transformer (RelGT)~\cite{5}.
\textbf{(5) LLM-based structural optimization:}
LLM-Struct~\cite{achiam2023gpt}.
All methods use identical temporal splits, feature encoders, and GNN
backbones where applicable.
For HAN, we construct simple meta-paths directly from the database schema.

We defer details on datasets (\ref{ap:dataset}), baselines (\ref{ap:baseline}), experimental setup (\ref{ap:setup}), 
effects of filtering strength and injection strength (\ref{ap:empirical}),
overall predictive performance on 
large-scale datasets (\ref{ap:large}), 
ablation study (\ref{ap:ablation})
and choice of structural complexity metri (\ref{ap:Complexity_Metric})
to the Appendix.

\subsection{Overall Predictive Performance}
\label{sec:rq3}
Tables~\ref{tab:rq3_cls}, \ref{tab:rq3_reg}, and \ref{tab:rq3_rec} report results on classification, regression, and recommendation tasks. 
Three consistent patterns emerge. 
(1) Our method achieves the best or near-best performance across almost all tasks. 
This is because, as predicted by our analysis in Section~\ref{sec:empirical}, filtering removes task-irrelevant structure that introduces noise and over-smoothing, while structural injection exposes task-relevant dependencies that are otherwise fragmented by the schema. 
As a result, our graphs provide cleaner signals for classification, lower-variance features for regression, and more coherent interaction structure for recommendation.
(2) Traditional RDL and heterogeneous GNN baselines (e.g., HAN, REL-GNN, RelGT) yield only moderate improvements over the Base model. 
Although these methods improve message passing, they operate on the same schema-derived graphs and therefore inherit their information overload and fragmented connectivity. When these structural bottlenecks dominate, architectural improvements alone cannot recover the missing or diluted task-relevant information.
(3) LLM-based structural optimization is competitive but inconsistent. 
While LLMs can propose reasonable schema-level modifications, they cannot exploit fine-grained instance-level patterns such as interaction frequency or temporal continuity, leading to injections that are often misaligned with the learning objective. 
In contrast, our optimizer is guided directly by task-aware gradients, allowing it to identify structural changes that consistently improve predictive performance.
Overall, these results confirm that principled structural optimization is effective across task types.

\begin{figure}[h]
  \centering  \includegraphics[width=1\linewidth]{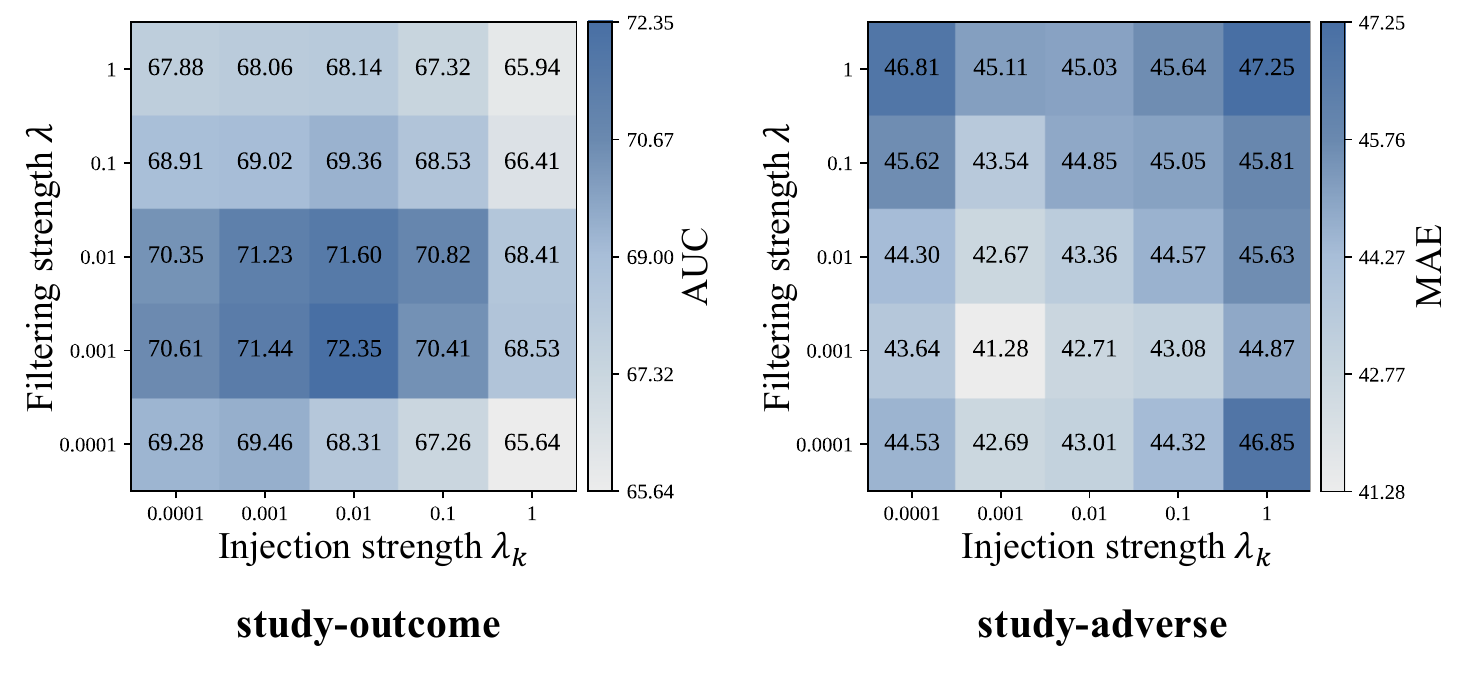}
  \vspace{-1em}
  \caption{Trade-off between information filtering and structural injection on study-outcome and study-adverse tasks.
  }
  \label{fig:tradeoff}
\end{figure}

\subsection{Trade-off Between Filtering and Injection}
\label{ap:tradeoff}
To study how filtering and structural injection interact, we vary two regularization coefficients: $\lambda$, which controls filtering strength, and $\lambda_k$, which controls the sparsity of injected templates. Figure~\ref{fig:tradeoff} shows the resulting performance landscape on the study-outcome and study-adverse tasks.
Across both tasks, performance depends non-monotonically on these two factors. When injection is strong but filtering is weak, performance degrades, indicating that excessive added structure amplifies noise. When filtering is strong but injection is weak, performance is also limited, as aggressive pruning removes useful relational signals. The best results appear in a balanced regime where both mechanisms are active. For study-outcome, the highest AUC (72.35) occurs around $\lambda \approx 10^{-3}$ and $\lambda_k \approx 10^{-2}$, while for study-adverse the optimum (41.28) occurs near $\lambda \approx 10^{-3}$ and $\lambda_k \approx 10^{-3}$. In both cases, the optimal region forms a smooth plateau, indicating robustness to small hyperparameter changes.

\begin{table}[t]
\centering
\Huge
\caption{Inference time comparison on six tasks.
We report average inference time in seconds (s). Best results are in bold.}
\vspace{-0.3em}
\label{tab:rq4_infer}
\resizebox{\linewidth}{!}{
\begin{tabular}{lcccccc}
\toprule
\textbf{Method} 
& \textbf{\makecell{study-\\outcome}} 
& \textbf{\makecell{user-\\engagement}} 
& \textbf{\makecell{study-\\adverse}} 
& \textbf{\makecell{post-\\votes} }
& \textbf{\makecell{condition-\\sponsor-run}} 
& \textbf{\makecell{user-item-\\purchase}} \\
\midrule
Base          
& 0.55 & 29.01 & 1.06 & 26.90 & 13.71 & 58.86 \\
REL-GNN       
& 0.64 & 27.93 & 1.75 & 28.06 & 17.24 & 60.48 \\
HAN        
& 1.15 & 32.56 & 2.90 & 33.65 & 19.04 & 70.85 \\
ColFilter     
& 0.18 & 14.07 &  0.47 & 13.64 & 5.87 & 15.76 \\
FullFilter    
&  \textbf{0.12} & \textbf{15.10} &  \textbf{0.39} & \textbf{9.41} & \textbf{5.35} & \textbf{13.24} \\
Ours 
&  0.46 & 20.60 &  0.86 & 15.57 & 10.07 & 25.81 \\
\bottomrule
\end{tabular}
}
\end{table}

\subsection{Inference efficiency}
\label{sec:rq5}
Table~\ref{tab:rq4_infer} reports the average inference time on six downstream tasks. FullFilter consistently achieves the lowest latency due to aggressive structural pruning. While not the fastest, our method maintains competitive and stable inference efficiency, consistently outperforming heavier relational baselines such as HAN and REL-GNN. Compared with filtering-only approaches, our method incurs moderate additional cost from relational modeling, but this overhead remains well controlled. Overall, the results show that our approach strikes a practical balance between predictive expressiveness and inference efficiency.

\begin{table}[t]
\centering
\caption{Effect of backbone GNNs on predictive performance.
Best results are in bold. OOM denotes the out-of-memory error.}
\vspace{-0.5em}
\label{tab:rq6_backbone}
\resizebox{1\linewidth}{!}{
\begin{tabular}{lccc}
\toprule
\textbf{Method} 
& \textbf{\makecell{study-\\outcome ($\uparrow$)} }
& \textbf{\makecell{study-\\adverse ($\downarrow$)} }
& \textbf{\makecell{condition-\\sponsor-run ($\uparrow$)}} \\
\midrule
Base (GraphSAGE)            & 69.44 & 45.50 & 2.83 \\
Ours (GraphSAGE)            & 72.35 & \textbf{41.28} & \textbf{6.24} \\
\midrule
Base (REL-GNN)                  & 70.86 & 44.93 & 3.47 \\
Ours (REL-GNN)                 & 71.68 & 42.37 & 4.84 \\
\midrule
Base (HGT)    & 70.18 & 44.60 & OOM \\
Ours (HGT)    & \textbf{72.84} & 42.55 & OOM \\
\bottomrule
\end{tabular}
}
\end{table}

\subsection{Robustness to Backbone GNNs} \label{ap:robustness}
To evaluate robustness across GNN backbones, we replace GraphSAGE with REL-GNN and HGT and test each backbone with and without our structural optimizer on three tasks. As shown in Table~\ref{tab:rq6_backbone}, the optimizer consistently improves performance across backbones and task types. Gains are largest for GraphSAGE, while REL-GNN shows smaller but still consistent improvements, indicating that relation-aware message passing only partially mitigates structural deficiencies. HGT also benefits from our optimizer on tasks where training is feasible.
Overall, these results confirm that structural optimization is backbone-agnostic and complementary to different heterogeneous GNN architectures.

\section{Conclusion}
\label{sec:conclusion}
This work demonstrates that strict adherence to raw database schemas limits predictive performance in RDL. We identify a fundamental structural mismatch between RDB storage efficiency and GNN reasoning requirements, resulting in information overload and semantic fragmentation.
By framing graph construction as a structural information bottleneck problem, we show that an optimal graph requires balancing variational compression to eliminate noise with inductive augmentation to restore connectivity. 
Our unified structural optimizer instantiates these principles, consistently outperforming state-of-the-art baselines across 26 diverse tasks. 


\section*{Impact Statement}
This research significantly advances the integration of deep learning with modern data infrastructure by addressing the structural mismatch between relational databases and graph neural networks. By introducing a principled framework to optimize relational graphs through information filtering and task-aligned structural injection, this work enhances the accessibility and efficiency of machine learning for large-scale relational datasets. 
The proposed structural optimizer not only improves predictive accuracy across diverse domains but also reduces the computational cost of inference by removing task-irrelevant data components. Furthermore, by automating the transformation of raw database schemas into task-dependent topologies, this work reduces the reliance on manual, domain-specific feature engineering, thereby democratizing the application of advanced graph-based reasoning to a wide range of industrial and scientific challenges.

{\normalem 
\bibliography{example_paper}
\bibliographystyle{icml2026}
}
\newpage
\appendix
\onecolumn

\begin{table*}[]
\centering
\tiny
\caption{Datasets statistics.}
\vspace{-1em}
\label{ta:dataset}
\resizebox{0.85\linewidth}{!}{
\begin{tabular}{lllllll}

\toprule
\multirow{2}{*}{\textbf{Dataset}} &
\multirow{2}{*}{\textbf{Task}} &
\multirow{2}{*}{\textbf{Abbr}} &
\multirow{2}{*}{\textbf{Task type}} &
\multicolumn{3}{c}{\textbf{\#Rows of training table}} \\
 & & & & Train & Validation & Test \\ \hline

\multirow{3}{*}{rel-amazon}
& item-churn         & i-ch  & classification   & 2,559,264 & 177,689 & 166,842 \\
& item-ltv           & i-ltv & regression       & 2,707,679 & 166,978 & 178,334 \\
& user-item-review   & u-i-v & recommendation   & 2,324,177 & 116,970 & 127,021 \\ \hline

\multirow{4}{*}{rel-avito}
& user-clicks        & u-clk & classification   & 59,454  & 21,183 & 47,996 \\
& user-visits        & u-vis & classification   & 86,619  & 29,979 & 36,129 \\
& ad-ctr             & a-ctr & regression       & 5,100   & 1,766  & 1,816  \\
& user-ad-visit      & u-a-v & recommendation   & 86,616  & 29,979 & 36,129 \\ 
\hline

\multirow{3}{*}{rel-event}
& user-repeat        & u-rep & classification   & 3,842  & 268 & 246 \\
& user-ignore        & u-ign & classification   & 19,239 & 4,185 & 4,010 \\
& user-attendance    & u-att & regression       & 19,261 & 2,014 & 2,006 \\ \hline

\multirow{3}{*}{rel-f1}
& driver-dnf         & d-dnf & classification   & 11,411 & 566 & 702 \\
& driver-top3        & d-t3  & classification   & 1,353  & 588 & 726 \\
& driver-position    & d-pos & regression       & 7,453  & 499 & 760 \\ \hline

\multirow{1}{*}{rel-hm}
& user-churn         & u-ch  & classification   & 3,871,410 & 76,556 & 74,575 \\
& item-sales         & i-sal & regression       & 5,488,184 & 105,542 & 105,542 \\
& user-item-purchase & u-i-p & recommendation   & 3,878,451 & 74,575 & 67,144 \\ \hline

\multirow{3}{*}{rel-stack}
& user-engagement    & u-eng & classification   & 1,360,850 & 85,838 & 88,137 \\
& user-badge         & u-bdg & classification   & 3,386,276 & 247,398 & 255,360 \\
& post-votes         & p-vot & regression       & 2,453,921 & 156,216 & 160,903 \\
& user-post-comment  & u-p-c & recommendation   & 21,239 & 825 & 758 \\
& post-post-related  & p-p-r & recommendation   & 5,855 & 226 & 258 \\ \hline

\multirow{5}{*}{rel-trial}
& study-outcome          & st-out & classification & 11,994 & 960 & 825 \\
& study-adverse          & st-adv & regression     & 43,335 & 3,596 & 3,098 \\
& site-success           & si-suc & regression     & 151,407 & 19,740 & 22,617 \\
& condition-sponsor-run  & c-sp-r & recommendation & 36,934 & 2,081 & 2,057 \\
& site-sponsor-run       & si-sp-r& recommendation & 669,310 & 37,003 & 27,428 \\

\bottomrule
\end{tabular}
}
\end{table*}

\section{Datasets}
\label{ap:dataset}
We evaluate our method on \textbf{RELBench}~\cite{15}, a comprehensive benchmark for relational learning derived from real-world, temporal relational databases. It provides a diverse collection of databases and realistic prediction tasks across seven distinct domains: e-commerce, online marketplaces, social event planning, sports analytics, retail, Q\&A platforms, and clinical trials. Each dataset features a multi-table schema with temporal foreign-key relationships, enabling the construction of temporal heterogeneous graphs for end-to-end predictive modeling. The benchmark encompasses 26 prediction tasks spanning three fundamental types: entity classification, entity regression, and recommendation.

Table~\ref{ta:dataset} provides detailed statistics for each dataset and task. Below, we briefly describe the domain and core content of each constituent dataset within RELBench.

\textbf{rel-amazon.} This dataset models interactions on the Amazon e-commerce platform. It encompasses product metadata (e.g., price, category), user reviews (including rating and text), and user engagement history, supporting tasks related to product recommendation and user behavior analysis.

\textbf{rel-avito.} Derived from Avito, a major online classifieds marketplace, this dataset contains user search queries, advertisement characteristics, and contextual metadata for transactions across various categories such as real estate and vehicles. It facilitates tasks like click-through-rate prediction and user activity forecasting.

\textbf{rel-event.} Sourced from Hangtime, a mobile app for tracking social plans, this dataset includes user interactions, event metadata, user demographic profiles, and social network connections. It is designed to study how social relationships influence event participation and user engagement.

\textbf{rel-f1.} This dataset comprises historical Formula 1 racing data from 1950 onward. It covers drivers, constructors, circuits, and includes granular records of race results, qualifying sessions, practice laps, and pit stops, enabling predictions about race outcomes and driver performance.

\textbf{rel-hm.} Capturing customer transactions and product information from H\&M's e-commerce platform, this dataset includes customer demographic attributes, detailed product descriptions, and purchase histories, supporting customer- and product-centric prediction tasks.

\textbf{rel-stack.} Based on the Stack Exchange network of Q\&A websites, this dataset contains detailed activity logs such as user biographies, posts, comments, edits, votes, and question-answer links. It enables the study of user engagement dynamics and content popularity.

\textbf{rel-trial.} Aggregated from the Aggregate Analysis of ClinicalTrials.gov (AACT) database, this dataset includes detailed records of clinical trial protocols, study designs, participant demographics, interventions, and outcome measures, serving as a foundation for predicting trial results and safety profiles.


\section{Baselines}
\label{ap:baseline}
We compare our unified structural optimizer with five groups of baselines.

\textbf{(1) Base relational GNN pipeline.}
This group corresponds to the standard schema-to-graph pipeline without
explicit structural optimization:
\begin{itemize}
    \item \textbf{Base.} The relational GraphSAGE~\cite{GraphSAGE} model used in Section~\ref{sec:empirical},
    trained on the original heterogeneous graph obtained from the RDB schema
    without filtering or structural injection.
\end{itemize}

\textbf{(2) Traditional non-graph methods.}
To measure the benefit of explicit relational modeling, we include a
strong tabular baseline:
\begin{itemize}
    \item \textbf{LightGBM~\cite{3}.} A gradient boosting decision tree model trained on
    flattened relational features, ignoring the graph structure.
\end{itemize}

\textbf{(3) Heterogeneous graph structure optimization methods.}
This category contains classical heterogeneous GNNs that enhance or
reweight connectivity based on schema semantics:
\begin{itemize}
    \item \textbf{HAN}~\cite{18}. A meta-path–based attention network. We
    construct simple meta-paths directly from the database schema and let HAN learn attention weights
    over these paths, providing a lightweight handcrafted structural
    optimization baseline.

    \item \textbf{GTN}~\cite{13}. A graph transformer that automatically generates useful meta-paths by learning to compose adjacency matrices. We initialize GTN with schema-derived relation matrices and allow it to learn multi-hop compositions, providing a learnable alternative to manually specified meta-paths.

    \item \textbf{HGSL}~\cite{12}. A heterogeneous graph structure learning method that jointly optimizes graph topology and node representations. We apply HGSL to refine the schema-derived graph by learning edge weights and injecting new connections based on feature similarity, serving as a feature-driven structural optimization baseline.

    \item \textbf{ID-GNN}~\cite{you2021identity} incorporates learnable node identity embeddings, particularly effective for recommendation tasks. We evaluate whether our structural optimizer provides orthogonal gains when applied to ID-GNN.

\end{itemize}

\textbf{(4) Relational deep learning (RDL) methods.}
These models are specifically designed for relational databases but do
not explicitly optimize the graph structure:
\begin{itemize}
    \item \textbf{REL-GNN}~\cite{4}. A relational GNN that aggregates
    information along foreign-key relations with task-specific encoders.
    \item \textbf{Relational Graph Transformer (RelGT)}~\cite{5}. A
    transformer-style relational encoder that applies multi-relation
    attention over the schema-derived graph.
\end{itemize}

\textbf{(5) LLM-based structural optimization.}
Finally, we consider a baseline that uses an external large language
model to design the graph structure ahead of training:
\begin{itemize}
    \item \textbf{LLM-Struct~\cite{achiam2023gpt}.} A single LLM module that jointly (i) filters
    irrelevant tables and columns and (ii) proposes additional task-relevant
    relations based on the textual schema and task description. The resulting
    filtered and augmented graph is then trained with the same Base backbone.
\end{itemize}

\section{Experimental Setup}
\label{ap:setup}
We implemented all models in PyTorch~\cite{hu2024pytorch} and conducted experiments on RELBench tasks using a single NVIDIA A30 GPU. All GNN-based
methods, including our optimizer and RDL baselines, are trained with the
Adam optimizer~\cite{adam}. 
To ensure fair comparison, every baseline
is trained under the same temporal train/validation/test split and uses
the same heterogeneous encoder described in Section~\ref{sec:pre}.
For methods requiring schema-derived structures (e.g., HAN),
we generate meta-paths directly from the RDB
schema without task-specific tuning. 
For the LLM-Struct baseline, we conduct experiments with GPT-4~\cite{achiam2023gpt},
and the LLM is queried once per task to produce a filtered and augmented graph,
which is then used to train the same Base backbone as in Section~\ref{sec:empirical}.
Figure~\ref{fig:llm-struct-prompt} shows the complete prompt template.

Hyperparameters for all GNN-based models are selected through grid
search on the validation set, following the same search ranges across
methods. 
For our optimizer, Table~\ref{tab:hyper} summarizes the
hyperparameter space, covering gating sparsity coefficients, VIB
regularization strength, and GNN depth/hidden dimensions.
All results are averaged over five independent runs with different random
seeds, and mean performance along with standard deviations are reported.

\begin{figure}[ht]
\centering
\framebox{
\begin{minipage}{0.92\textwidth}
\footnotesize
\textbf{System Role:} You are an expert in relational database schema design and graph neural network architectures. Your task is to analyze database schemas for machine learning tasks and recommend structural optimizations.

\medskip
\textbf{Your Task:} 
 Given a relational database schema and a prediction task, you need to: 
 
(1) \textbf{Filter}: Identify tables and columns that are likely irrelevant to the prediction task.

(2) \textbf{Augment}: Propose additional edge types that would improve information flow for graph neural networks.

\medskip
\textbf{Input:}

Prediction Task: \texttt{\{task\_description\}}

Database Schema: \texttt{\{dataset\_schema\}}

\medskip
\textbf{Output (JSON):}
{\small\ttfamily
\noindent\{\\
\quad"keep": \{\\
\quad\quad"table\_name\_1": ["column1", "column2", ...],\\
\quad\quad"table\_name\_2": ["column1", "column2", ...],\\
\quad\quad...\\
\quad\},\\
\quad"add\_edges": [\\
\quad\quad\{"from": "table\_A", "to": "table\_B",\\
\quad\quad~"reason": "brief explanation"\},\\
\quad\quad...\\
\quad]\\
\}
}

\end{minipage}
}
\caption{LLM-Struct prompt template.}
\label{fig:llm-struct-prompt}
\end{figure}

\begin{table}[t]
\centering
\tiny
\caption{Hyperparameter search space for the unified structural optimizer.}
\vspace{-1em}
\label{tab:hyper}
\resizebox{0.5\linewidth}{!}{
\begin{tabular}{l c}
\toprule
\textbf{Hyperparameter} & \textbf{Search Range / Setting} \\
\midrule
Learning rate & \{1e{-}4, 1e{-}3,  1e{-}2\} \\
Hidden dimension & \{64, 128, 256\} \\ 
\midrule
Column-level sparsity $\lambda$ & [1e{-}4, 1e{-}2] \\
Table-level sparsity $\lambda_s$  & [1e{-}4, 1e{-}2] \\
Structural-template sparsity $\lambda_k$ & [1e{-}4, 1e{-}2] \\
VIB coefficient $\beta_{\text{vib}}$ & [1e{-}4, 1e{-}2] \\
\midrule
KNN neighbors $n$ & \{1, 5, 10, 15, 20\} \\
Temporal chain length $m$ & \{1, 3, 5, 7, 9\} \\
\bottomrule
\end{tabular}
}
\end{table}

\begin{figure*}[h]
  \centering  \includegraphics[width=1\linewidth]{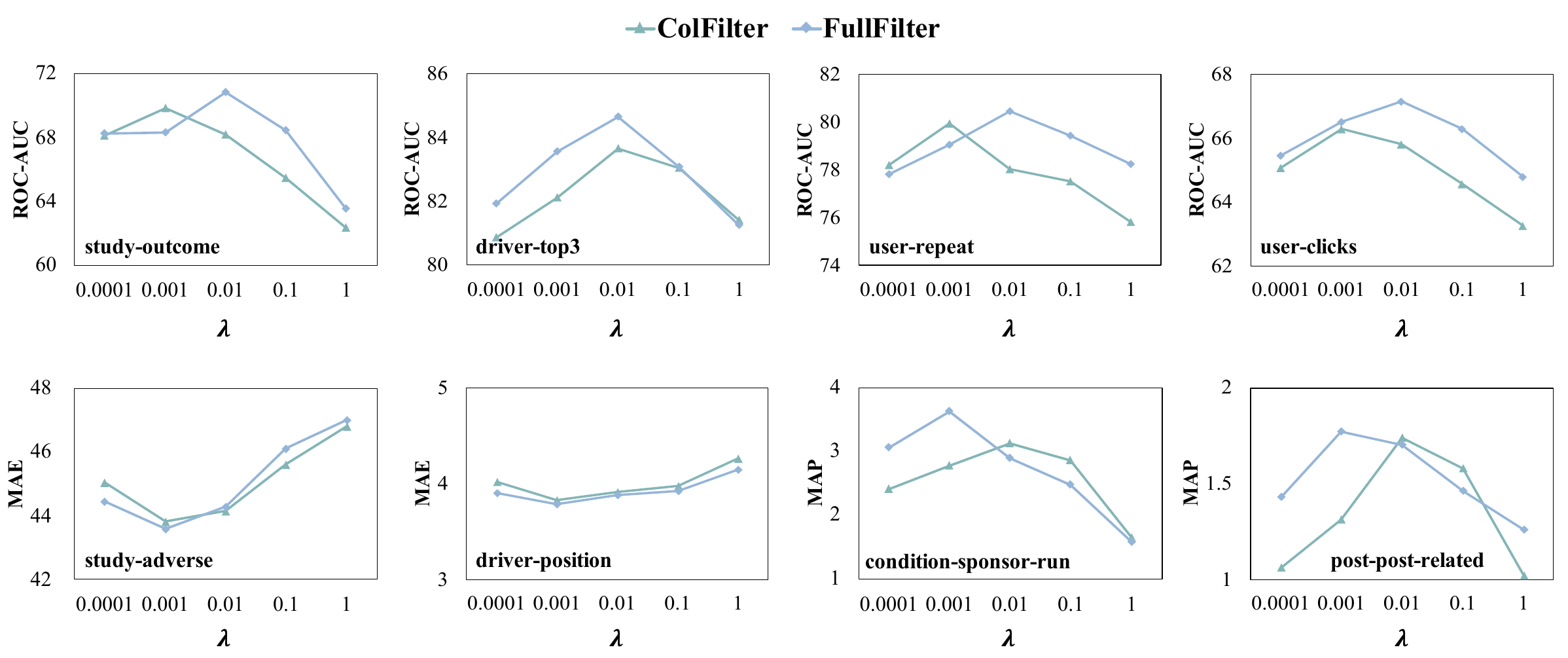}
  \vspace{-2em}
  \caption{Effects of filtering strength across tasks.
  }
  \label{fg:lamada}
\end{figure*}

\section{Empirical Analysis}
\label{ap:empirical}
\subsection{Effects of Filtering Strength.}
Figure~\ref{fg:lamada} examines how filtering strength $\lambda$ affects predictive performance across eight diverse tasks, where $\lambda$ represents the sparsity regularization coefficient in the gating module. 
The results yield two primary observations:
(1) Performance follows a consistent non-monotonic pattern. 
For classification (ROC-AUC) and recommendation (MAP) tasks, we observe an \textit{inverse-U trajectory}: performance improves as $\lambda$ increases from negligible values, peaks at an optimal $\lambda^*$, and then deteriorates as $\lambda$ becomes too aggressive. 
Conversely, regression tasks (MAE) exhibit a corresponding \textit{U-shaped pattern}, where the error decreases to a minimum before rising again. 
This confirms the existence of an optimal compression point for all tasks, that effectively suppresses redundancy without discarding task-critical information.
(2) The optimal filtering strength $\lambda^*$ is highly task-dependent. 
The ``sweet spot'' varies significantly depending on the task semantics. For instance, most classification tasks, such as \texttt{driver-top3}, \texttt{user-repeat}, and \texttt{user-clicks}, align with \texttt{study-outcome} in achieving peak performance at a moderate strength ($\lambda \approx 0.01$). In contrast, regression and recommendation tasks like \texttt{driver-position} and \texttt{condition-sponsor-run} tend to favor a milder filtering strength ($\lambda \approx 0.001$). 
This sensitivity analysis underscores that filtering is not a ``one-size-fits-all'' operation; rather, the optimal balance between noise reduction and evidence retention is intrinsically dictated by the specific nature of the downstream prediction task.

\begin{figure*}[h]
  \centering  \includegraphics[width=1\linewidth]{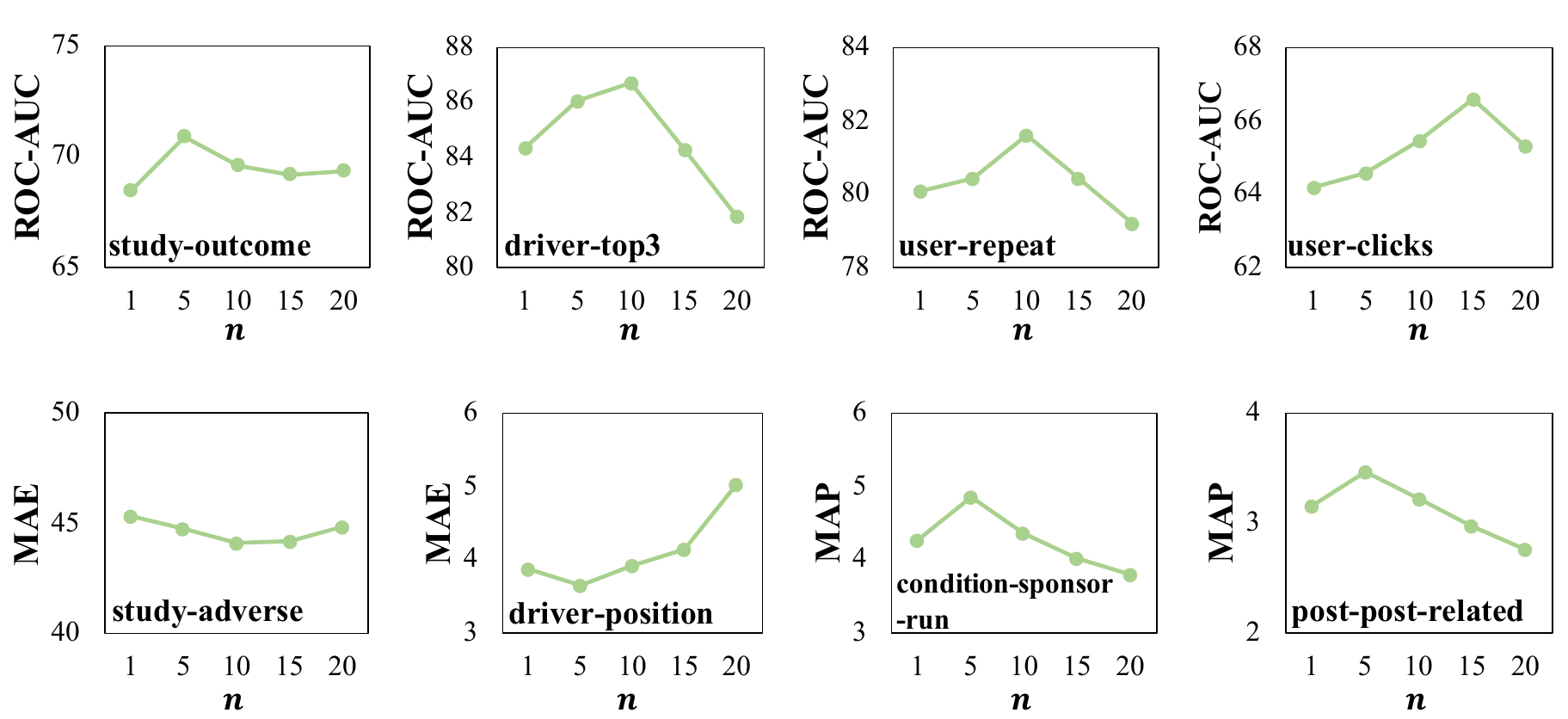}
  \vspace{-2em}
  \caption{Effects of structural injection strength of KNN across tasks.
  }
  \label{fg:lamada2_knn}
\end{figure*}
\begin{figure*}[h]
  \centering  \includegraphics[width=1\linewidth]{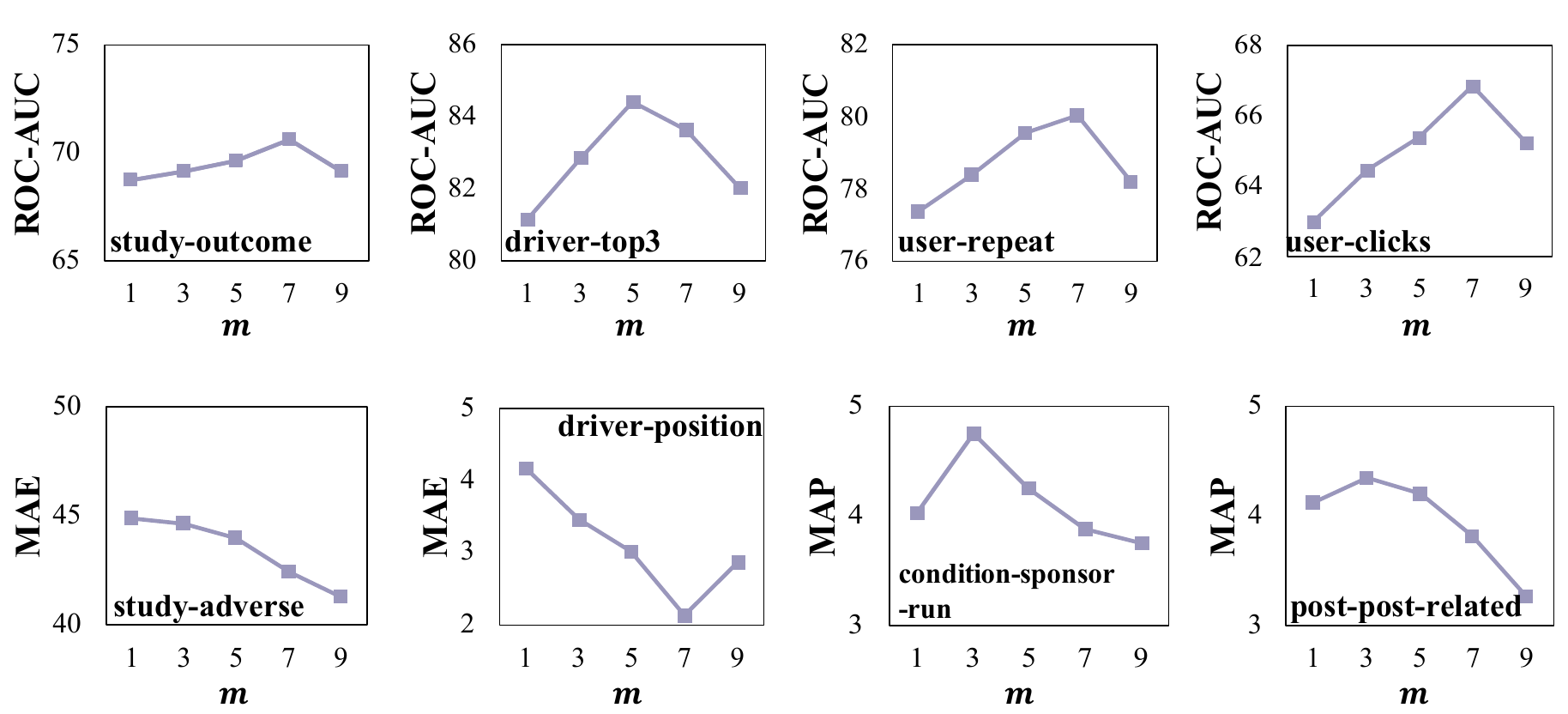}
  \vspace{-2em}
  \caption{Effects of structural injection strength of TempCont across tasks.
  }
  \label{fg:lamada2_temp}
\end{figure*}

\subsection{Effects of Injection Strength.}
Figure~\ref{fg:lamada2_knn} and Figure~\ref{fg:lamada2_temp} report how the injection strength affects the performance of KNN (controlled by the number of neighbors $n$) and TempCont (controlled by the chain length $m$) on eight representative tasks. 
(1) For KNN, as $n$ increases, performance first improves and then declines. Moderate neighborhoods yield the best results, indicating that reinforcing localized similarity is helpful, whereas larger neighborhoods introduce unrelated nodes and weaken the signal.
(2) For TempCont, performance improves as $m$ increases by reinforcing short-to-medium temporal dependencies, but gains plateau at larger values. 
This indicates that the useful temporal horizon is limited. 
Extending chains too far adds little new information.
Overall, these findings reinforce the principle that effective structural optimization requires task-aware calibration of both the strategy and its intensity, rather than applying fixed heuristics.

\section{Overall Predictive Performance on Large-scale Datasets}
\label{ap:large}
Table~\ref{tab:large_scale} reports the predictive performance on large-scale datasets spanning classification, regression, and recommendation tasks. Overall, our method demonstrates strong and stable performance across task types, achieving the best results on four out of six tasks and remaining competitive on the rest.
On classification and regression tasks, our method consistently outperforms architectural baselines and achieves the best or second-best performance, indicating that principled structural optimization remains effective at scale.
In particular, the gains on item-churn and item-ltv suggest that filtering and task-aligned structural injection help control redundancy and preserve predictive signals in large, heterogeneous graphs.
For recommendation tasks (user-item-purchase and user-item-review), our method achieves the best performance, outperforming both relational GNN baselines and LLM-based structural heuristics. While LLM-Struct performs competitively on certain tasks such as user-churn and post-votes, its performance varies across tasks, whereas our approach maintains more consistent gains.
Overall, these results demonstrate that the proposed structural optimization framework scales effectively to large datasets, providing robust improvements across diverse tasks without relying on task-specific heuristics or dataset-dependent tuning.

\begin{table*}[t]
\centering
\caption{Performance on large-scale datasets across three task types.
For classification we report ROC-AUC ($\uparrow$), for regression MAE ($\downarrow$),
and for recommendation MAP ($\uparrow$).
Best scores are in \textbf{bold}, second best in \emph{italics}.}
\vspace{-0.5em}
\label{tab:large_scale}
\resizebox{\linewidth}{!}{
\begin{tabular}{lccccccccc}
\toprule
\textbf{Method} 
& \textbf{user-churn} ($\uparrow$)
& \textbf{item-churn} ($\uparrow$)
& \textbf{user-engagement} ($\uparrow$)
& \textbf{user-badge} ($\uparrow$)
& \textbf{item-ltv} ($\downarrow$)
& \textbf{post-votes} ($\downarrow$)
& \textbf{item-sales} ($\downarrow$)
& \textbf{user-item-purchase} ($\uparrow$)
& \textbf{user-item-review} ($\uparrow$)
\\
\midrule
Base            
& 68.24$_{\pm0.18}$ 
& 80.87$_{\pm0.21}$ 
& 88.86$_{\pm0.16}$ 
& 86.83$_{\pm0.35}$
& 48.75$_{\pm0.03}$ 
& 0.074$_{\pm0.02}$
& 0.058$_{\pm0.02}$
& 2.64$_{\pm0.11}$ 
& 0.47$_{\pm0.08}$
\\

LightGBM        
& 65.33$_{\pm0.20}$ 
& 75.46$_{\pm0.25}$ 
& 64.94$_{\pm0.36}$ 
& 65.71$_{\pm0.33}$
& 53.20$_{\pm0.04}$ 
& 0.078$_{\pm0.02}$ 
& 0.079$_{\pm0.03}$
& 0.68$_{\pm0.04}$ 
& 0.36$_{\pm0.10}$
\\

HAN             
& 67.65$_{\pm0.17}$ 
& 78.27$_{\pm0.20}$ 
& 73.72$_{\pm0.30}$ 
& 75.57$_{\pm0.19}$ 
& 47.65$_{\pm0.03}$ 
& 0.078$_{\pm0.02}$ 
& 0.068$_{\pm0.02}$
& 1.37$_{\pm0.11}$ 
& 0.51$_{\pm0.07}$
\\
REL-GNN          
& 70.16$_{\pm0.15}$
& 81.66$_{\pm0.19}$
& 88.26$_{\pm0.27}$
& \emph{88.94}$_{\pm0.25}$
& 47.02$_{\pm0.03}$
& 0.065$_{\pm0.02}$
& \textbf{0.053}$_{\pm0.02}$
& 2.88$_{\pm0.10}$
& \emph{0.62}$_{\pm0.06}$
\\
LLM-Struct      
& \textbf{73.50}$_{\pm0.16}$ 
& \emph{82.51}$_{\pm0.21}$ 
& \emph{90.21}$_{\pm0.35}$ 
& 87.31$_{\pm0.54}$
& \emph{46.31}$_{\pm0.03}$ 
& \textbf{0.063}$_{\pm0.02}$ 
& 0.056$_{\pm0.03}$
& 3.12$_{\pm0.10}$ 
& 0.57$_{\pm0.07}$
\\
Ours
& \emph{72.18}$_{\pm0.14}$ 
& \textbf{84.17}$_{\pm0.18}$ 
& \textbf{91.62}$_{\pm0.14}$ 
& \textbf{90.47}$_{\pm0.28}$ 
& \textbf{44.93}$_{\pm0.03}$ 
& 0.065$_{\pm0.02}$ 
& \emph{0.054}$_{\pm0.02}$
& \textbf{3.36}$_{\pm0.09}$ 
& \textbf{0.68}$_{\pm0.06}$
\\
\bottomrule
\end{tabular}
}
\end{table*}

\begin{figure}[h]
  \centering  \includegraphics[width=1\linewidth]{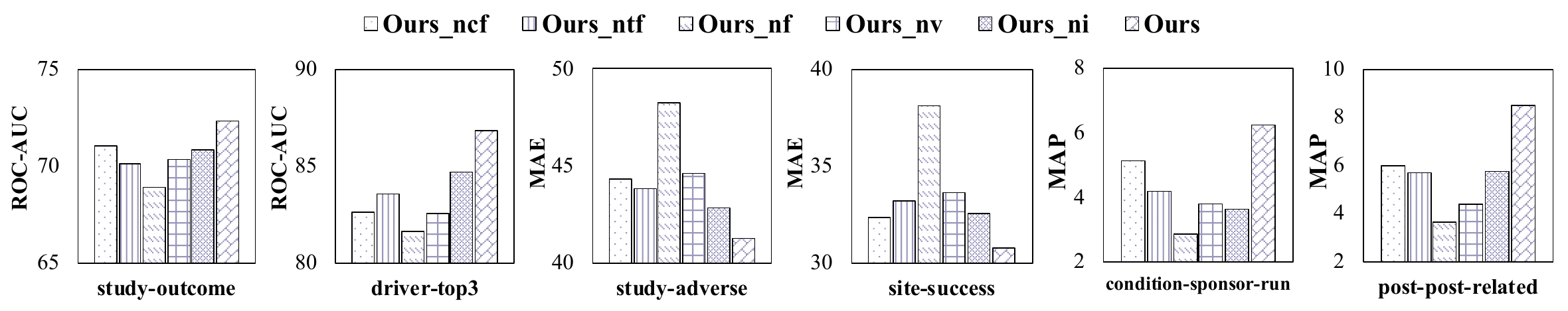}
  \vspace{-2em}
  \caption{Ablation Study.
  }
  \label{fig:ablation}
\end{figure}

\section{Ablation Study}
\label{ap:ablation}
To better understand the contribution of each component in our structural
optimizer, we conduct an ablation study on six representative RELBench
tasks. 
We compare the full model (Ours) with several variants:
Ours\_ncf (\textbf{n}o \textbf{c}olumn-level \textbf{f}iltering),
Ours\_ntf (\textbf{n}o \textbf{t}ype-level \textbf{f}iltering),
Ours\_nf (\textbf{n}o \textbf{f}iltering),
Ours\_nv (\textbf{n}o \textbf{V}IB),
Ours\_ni (\textbf{n}o \textbf{i}njection).

Figure~\ref{fig:ablation} reports the results.
We observe three clear trends from Figure~\ref{fig:ablation}. 
First, the full model consistently achieves the best performance across all six tasks, indicating that filtering, VIB, and structural injection are all
useful and complementary. 
Removing all filtering (Ours\_nf) leads to the
largest degradation, especially on the regression-style tasks, showing that column/type sparsification is crucial for preventing noisy relations from dominating the graph. 
Second, turning off either column-level or type-level filtering (Ours\_ncf and Ours\_ntf) also hurts performance, but less severely than removing filtering entirely, which suggests that both levels contribute and their combination is most effective.
Third, disabling injection (Ours\_ni) or the VIB term
(Ours\_nv) consistently reduces performance compared to the full model: injection brings larger gains on tasks that benefit from additional relational paths, while VIB provides a smaller but steady improvement by
regularizing node representations. 
Overall, these trends support our design choice that effective relational learning requires both information filtering and carefully controlled structural augmentation.

\section{Choice of Structural Complexity Metric}
\label{ap:Complexity_Metric}
We compare $\ell_0$ sparsity against two alternatives: $\ell_1$ penalty and edge density penalty~\cite{13}.

\begin{table}[h]
\centering
\small
\begin{tabular}{lccc}
\toprule
\textbf{Complexity Term} & \textbf{study-outcome (↑)} & \textbf{driver-top3 (↑)} & \textbf{condition-sponsor-run (↑)} \\
\midrule
Density penalty & 70.46 & 83.91 & 4.22 \\
$\ell_1$ penalty & 71.58 & 84.25 & 5.36 \\
$\ell_0$ (Ours) & \textbf{72.35} & \textbf{86.82} & \textbf{6.24} \\
\bottomrule
\end{tabular}
\caption{Ablation on structural complexity metrics.}
\label{tab:Complexity}
\end{table}

Table~\ref{tab:Complexity} reports the performance,
from the table, we observe:
 (1) Density penalties cannot distinguish between removing one irrelevant 10,000-node table vs. ten relevant 100-node tables—both reduce edge count equally but have drastically different effects on signal quality. 
 (2) $\ell_1$ over-smooths: it encourages all gates toward zero uniformly, whereas $\ell_0$ via Hard-Concrete creates discrete keep/drop decisions aligned with Theorem 3.2's entropy bound.

\section{Additional Theoretical Analysis and Proofs}
\label{ap:proof}

\subsection{Proof of Theorem~\ref{thm:entropy}}
Let $M=(M_1,\ldots,M_J)$ be Bernoulli gates with $\pi_j=\mathbb{P}(M_j=1)$ and $\mu=\sum_{j=1}^J \pi_j=\mathbb{E}\|M\|_0$.
By subadditivity of Shannon entropy,
\vspace{-1em}
\begin{equation}
H(M)=H(M_1,\ldots,M_J)\le \sum_{j=1}^J H(M_j).
\end{equation}
Since each $M_j$ is Bernoulli, $H(M_j)=h(\pi_j)$ where $h(\cdot)$ is binary entropy, hence
\begin{equation}
H(M)\le \sum_{j=1}^J h(\pi_j).
\end{equation}
By concavity of $h(\cdot)$ and Jensen’s inequality,
\begin{equation}
\frac{1}{J}\sum_{j=1}^J h(\pi_j)
\le
h\!\left(\frac{1}{J}\sum_{j=1}^J \pi_j\right)
=
h\!\left(\frac{\mu}{J}\right),
\end{equation}
which implies
\begin{equation}
\sum_{j=1}^J h(\pi_j)\le J\,h\!\left(\frac{\mu}{J}\right).
\end{equation}
Combining yields $H(M)\le \sum_{j=1}^J h(\pi_j)\le J\,h(\mu/J)$.
Finally, $h(p)$ is monotone increasing on $p\in[0,1/2]$, so in the sparse regime $\mu/J\le 1/2$, the bound $J\,h(\mu/J)$ increases with $\mu$.
\qed

\subsection{Structural injection as search-space enrichment}
We formalize the ``enrichment'' claim used in Section~\ref{sec:empirical}.
Recall that $R(\phi_g,\theta)$, $\Omega(\phi_g)$, and $R^\star(\phi_g)$ are defined in Eq.~\eqref{eq:population-risk}, Eq.~\eqref{eq:structural-complexity}, and Eq.~\eqref{eq:best-risk}, respectively.
For any feasible set $\Phi$ of structural parameters, define the best achievable regularized objective
\begin{equation}
J^\star(\Phi)
:=
\inf_{\phi_g\in \Phi}\Big\{R^\star(\phi_g)+\Omega(\phi_g)\Big\}.
\label{eq:Jstar}
\end{equation}

\paragraph{Feasible families.}
Let $\Phi_{\text{filter}}$ denote the structural family obtainable using filtering alone (e.g., all template gates fixed to zero), and let $\Phi_{\text{full}}$ denote the family when both filtering and (gated) injection are allowed.
By construction,
\begin{equation}
\Phi_{\text{filter}}\subseteq \Phi_{\text{full}}.
\label{eq:Phi-nesting}
\end{equation}

\begin{proposition}[Monotonicity under structural enrichment]
\label{prop:enrich}
Assume \eqref{eq:Phi-nesting}. Then
\begin{equation}
J^\star(\Phi_{\text{full}})\le J^\star(\Phi_{\text{filter}}).
\label{eq:mono}
\end{equation}
Moreover, if there exists some $\bar{\phi}_g\in \Phi_{\text{full}}\setminus \Phi_{\text{filter}}$ such that
$R^\star(\bar{\phi}_g)+\Omega(\bar{\phi}_g) < J^\star(\Phi_{\text{filter}})$,
then the inequality in \eqref{eq:mono} is strict.
\end{proposition}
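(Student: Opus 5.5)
The argument uses only the definition of $J^\star$ in \eqref{eq:Jstar} as an infimum of one fixed function, $F(\phi_g):=R^\star(\phi_g)+\Omega(\phi_g)$, over a feasible set. The key point is that $F$ does not depend on which family $\phi_g$ is regarded as belonging to. Each $\phi_g\in\Phi_{\text{filter}}$ is literally the same object in $\Phi_{\text{full}}$, with template gates fixed to zero, and it has the same value of $R^\star$ and of $\Omega$. The template term in \eqref{eq:structural-complexity} vanishes there, since $\mathbb{E}\|g_k\|_0=0$. I will state this identification explicitly, because it is exactly what \eqref{eq:Phi-nesting} encodes.

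For the weak inequality \eqref{eq:mono}, I use monotonicity of the infimum under set inclusion. Fix any $\phi_g\in\Phi_{\text{filter}}$. By \eqref{eq:Phi-nesting}, $\phi_g\in\Phi_{\text{full}}$, so $J^\star(\Phi_{\text{full}})\le F(\phi_g)$. Taking the infimum over $\phi_g\in\Phi_{\text{filter}}$ gives $J^\star(\Phi_{\text{full}})\le J^\star(\Phi_{\text{filter}})$. I will note that this holds in the extended reals, so no attainment of either infimum and no finiteness is needed; the cases $-\infty$ and $+\infty$ are harmless. If $\Phi_{\text{filter}}$ were empty the right side is $+\infty$, but it is nonempty by construction.

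For strictness, let $\bar\phi_g\in\Phi_{\text{full}}\setminus\Phi_{\text{filter}}$ with $F(\bar\phi_g)<J^\star(\Phi_{\text{filter}})$. Since $\bar\phi_g$ is feasible for the larger family, $J^\star(\Phi_{\text{full}})\le F(\bar\phi_g)<J^\star(\Phi_{\text{filter}})$. The hypothesis already forces $J^\star(\Phi_{\text{filter}})>F(\bar\phi_g)\ge -\infty$, so the strict comparison is meaningful. Membership in $\Phi_{\text{filter}}$ is not even used here; it only explains why such a point could exist, since no point of $\Phi_{\text{filter}}$ can satisfy the strict bound.

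There is no real technical obstacle. The one subtle step is the identification in the first paragraph: that $R^\star$ and $\Omega$ are evaluated identically on embedded filter-only structures. In particular, the Hard-Concrete gates for templates must be able to be exactly zero, which the stretch-and-clip to $[0,1]$ permits. I will state this as the interpretation of \eqref{eq:Phi-nesting} rather than try to re-derive it.
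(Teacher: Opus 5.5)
Your proof is correct and is essentially the paper's argument: the infimum of the fixed objective $R^\star+\Omega$ cannot increase when passing from $\Phi_{\text{filter}}$ to the larger set $\Phi_{\text{full}}$, and strictness holds because the single feasible point $\bar\phi_g$ already lies below $J^\star(\Phi_{\text{filter}})$. Your extra remarks (extended-real values, the unused condition $\bar\phi_g\notin\Phi_{\text{filter}}$) are accurate but not needed; the one imprecise remark is that Hard-Concrete gates with finite logits have strictly positive $\mathbb{E}\|g_k\|_0$, so an exactly zero template gate is an idealization — harmless here, since the nesting is assumed rather than derived.
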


\begin{proof}
By \eqref{eq:Phi-nesting}, taking the infimum over the larger feasible set cannot increase the optimum:
\[
J^\star(\Phi_{\text{full}})
=
\inf_{\phi_g\in \Phi_{\text{full}}}\big\{R^\star(\phi_g)+\Omega(\phi_g)\big\}
\le
\inf_{\phi_g\in \Phi_{\text{filter}}}\big\{R^\star(\phi_g)+\Omega(\phi_g)\big\}
=
J^\star(\Phi_{\text{filter}}).
\]
If there exists $\bar{\phi}_g\in \Phi_{\text{full}}\setminus \Phi_{\text{filter}}$ with
$R^\star(\bar{\phi}_g)+\Omega(\bar{\phi}_g) < J^\star(\Phi_{\text{filter}})$,
then $\bar{\phi}_g$ achieves a strictly smaller objective value than any $\phi_g\in \Phi_{\text{filter}}$, hence
$J^\star(\Phi_{\text{full}})<J^\star(\Phi_{\text{filter}})$.
\end{proof}


\ULforem 
\end{document}